\newcommand{\sinreq}[0]{\textbf{\textsf{{WaveQ}}}\xspace}
\newcommand{\ballnumber}[1]{\tikz[baseline=(myanchor.base)] \node[circle,fill=.,inner sep=1pt] (myanchor) {\color{-.}\bfseries\footnotesize #1};}
\DeclareMathAlphabet{\pazocal}{OMS}{zplm}{m}{n}
\DeclarePairedDelimiter\ceil{\lceil}{\rceil}
\newcommand{\quantize}{\mathrm{quantize}}
\newcommand{\round}{\mathrm{round}}
\newcommand{\step}{\mathrm{step}}
\numberwithin{equation}{section}
\theoremstyle{plain}
\newtheorem*{theorem*}{Theorem}
\newtheorem*{lemma*}{Lemma}
\newtheorem{theorem}{Theorem}
\newtheorem{lemma}{Lemma}[section]
\newtheorem{question}[lemma]{Question}
\theoremstyle{definition}
\newtheorem{definition}[lemma]{Definition}
\newcommand{\be}{\begin{equation}}
\newcommand{\ee}{\end{equation}}
\newcommand{\ba}{\begin{array}{l}}
\newcommand{\ea}{\end{array}}
\icmltitlerunning{}
\begin{document}

\twocolumn[
\icmltitle{WaveQ: Gradient-Based Deep Quantization of Neural Networks through Sinusoidal Adaptive Regularization}



\icmlsetsymbol{equal}{*}


\begin{icmlauthorlist}
\icmlauthor{Ahmed T. Elthakeb}{to}
\icmlauthor{Prannoy Pilligundla}{goo}
\icmlauthor{Fatemehsadat Mireshghallah}{goo}
\icmlauthor{Tarek Elgindi}{ed}
\icmlauthor{Charles-Alban Deledalle}{bos}
\icmlauthor{Hadi Esmaeilzadeh}{goo} \\
\textbf{A}lternative \textbf{C}omputing \textbf{T}echnologies ({\color[HTML]{0B6121}{\textbf{ACT}}}) Lab\\
University of California San Diego  \\
\end{icmlauthorlist}


\icmlaffiliation{to}{Department of Electrical and Computer Engineering, University of California San Diego.} 
\icmlaffiliation{goo}{Department of Computer Science, University of California San Diego.} 
\icmlaffiliation{ed}{Department of Mathematics, University of California San Diego.} 
\icmlaffiliation{bos}{Institut de Mathèmatiques de Bordeaux, CNRS, Université de Bordeaux, Bordeaux INP} 

\icmlcorrespondingauthor{Ahmed T. Elthakeb}{a1yousse@eng.ucsd.edu}

\icmlkeywords{Machine Learning, ICML}

\vskip 0.1in
]



\printAffiliationsAndNotice{}  

\begin{abstract}
As deep neural networks make their ways into different domains and application, their compute efficiency is becoming a first-order constraint.
Deep quantization, which reduces the bitwidth of the operations (below eight bits), offers a unique opportunity as it can reduce both the storage and compute requirements of the network super-linearly.
%
%
However, if not employed with diligence, this can lead to significant accuracy loss. 
Due to the strong inter-dependence between layers and exhibiting different characteristics across the same network, choosing an optimal bitwidth per layer granularity is not a straight forward.
As such, deep quantization opens a large hyper-parameter space, the exploration of which is a major challenge.
%
%
%
We propose a novel sinusoidal regularization, called \sinreq, for deep quantized training.
%
%
Leveraging the sinusoidal properties, we seek to learn multiple quantization parameterization in conjunction during \emph{gradient-based} training process.
Specifically, we \emph{learn} \emph{{(i) a per-layer quantization bitwidth}} along with \emph{{(ii) a scale factor}} through learning the period of the sinusoidal function.
At the same time, we exploit the periodicity, \emph{differentiability}, and the local convexity profile in sinusoidal functions to automatically propel \emph{{(iii) network weights}} towards values quantized at levels that are jointly determined.
We show how \sinreq balance compute efficiency and accuracy, and provide a heterogeneous bitwidth assignment for quantization of a large variety of deep networks (AlexNet, CIFAR-10, MobileNet, ResNet-18, ResNet-20, SVHN, and VGG-11) that virtually preserves the accuracy.
Furthermore, we carry out experimentation using fixed homogenous bitwidths with 3- to 5-bit assignment and show the versatility of \sinreq in enhancing quantized training algorithms (DoReFa and WRPN) 
with about $4.8\%$ accuracy improvements on average, and then outperforming multiple state-of-the-art techniques.
%
%
\end{abstract}

\if 0
Deep quantization of neural networks (below eight bits) offers significant promise in reducing their compute and storage cost.
Albeit alluring, without special techniques for training and optimization, deep quantization results in significant accuracy loss.
To further mitigate this loss, we propose a novel sinusoidal regularization, called \sinreq, for deep quantized training.
\sinreq adds a \textit{periodic} term to the original objective function of the underlying training algorithm.
Leveraging the sinusoidal properties, we are able to learn multiple quantization parameterization during training process.
We infer a per-layer quantization bitwidth along with a scale factor through learning the period (i.e. frequency) of the sinusoidal function.
At the same time, we exploit the periodicity, \emph{differentiability}, and the desired convexity profile in sinusoidal functions to automatically propel weights towards values that are inherently closer to quantization levels according to the jointly learned quantization bitwidth.
%
%
%
Since, this technique does not require invasive changes to the training procedure, \sinreq can harmoniously enhance quantized training algorithms.
%
%
We carry out experimentation using the AlexNet, CIFAR-10, MobileNet-V2, ResNet-18, ResNet-20, SVHN, and VGG-11 DNNs with three to five bits for quantization and show the versatility of \sinreq in enhancing multiple quantized training algorithms, DoReFa~\cite{Zhou2016DoReFaNetTL} and WRPN~\cite{Mishra2017WRPNWR}.
Averaging across all the bit configurations shows that \sinreq closes the accuracy gap between these two techniques and the full-precision runs by 32.4\% and 27.5\%, respectively.
That is improving the absolute accuracy of DoReFa and WRPN by 2.8\% and 2.1\%, respectively.

%
%
%

\fi 

\vspace{-0.5cm}
\section{Introduction}
\vspace{-0.1cm}
\label{sec:intro}
\if 0
\begin{figure}
  \centering 
  \includegraphics[width=0.3\textwidth]{figs/fig0_intro.pdf}
   \vspace{-0.3cm}
  \caption{Sketch for deep neural networks layer-wise quantization space}
  \label{fig:overview}
 \vspace{-0.7cm}
\end{figure}
\fi 
%
%
%
Quantization, in general, and deep quantization (below eight bits), in particular, aim to not only reduce the compute requirements of DNNs but also significantly reduce their memory footprint~\cite{Zhou2016DoReFaNetTL,Judd2016StripesBD,Hubara2017QNN,Mishra2017WRPNWR,bitfusion}.
Nevertheless, without specialized training algorithms, quantization can diminish the accuracy.
%
%
%
As such, the practical utility of quantization hinges upon addressing two fundamental challenges: (1) discovering the appropriate bitwidth of quantization for each layer while considering the accuracy; and (2) learning weights in the quantized domain for a given set of bitwidths.

This paper formulates both of these problems as a \emph{gradient-based} joint optimization problem by introducing in the training loss an additional and novel sinusoidal regularization term, called \sinreq.
The following two main insights drive this work.
(1) Sinusoidal functions ($sin^2$) have inherent periodic minima and by adjusting the period, the minima can be positioned on quantization levels corresponding to a bitwidth at per-layer granularity.
(2) As such, sinusoidal period becomes a direct and continuous representation of the bitwidth.
Therefore, \sinreq incorporates this continuous variable (i.e., period) as a differentiable part of the training loss in the form of a regularizer.
Hence, \sinreq can piggy back on the stochastic gradient descent that trains the neural network to also learn the bitwidth (the period). 
Simultaneously this parametric sinusoidal regularizer pushes the weights to the quantization levels ($sin^2$ minima).

By adding our sinusoidal regularizer to the original training objective function, our method automatically yields the bitwidths for each layer along with nearly quantized weights for those bitwidths.
In fact, the original optimization procedure itself is harnessed for this purpose, which is enabled by the differentiability of the sinusoidal regularization term.
As such, quantized training algorithms~\cite{Zhou2016DoReFaNetTL,Mishra2017WRPNWR} that still use some form of backpropagation~\cite{rumelhart:errorpropnonote} can effectively utilize the proposed mechanism by modifying their loss.
Moreover, the proposed technique is flexible as it enables heterogenous quantization across the layers.
The \sinreq regularization can also be applied for training a model from scratch, or for fine-tuning a pretrained model.

In contrast to the prior inspiring works~\cite{DBLP:journals/corr/abs-1905-11452, DBLP:journals/corr/abs-1902-08153}, \sinreq is the only technique that casts finding the  bitwidthes and the corresponding quantized weights as a simultaneous gradient-based optimization through sinusoidal regularization during the training process.
We also prove a theoretical result providing insights on why the proposed approach leads to solutions preserving the original accuracy while being prone to quantization.
%
We evaluate \sinreq using different bitwidth assignments across different DNNs (AlexNet, CIFAR-10, MobileNet, ResNet-18, ResNet-20, SVHN, and VGG-11).
To show the versatility of \sinreq, it is used with two different quantized training algorithms, DoReFa~\cite{Zhou2016DoReFaNetTL} and WRPN~\cite{Mishra2017WRPNWR}.
%
Over all the bitwidth assignments, the proposed regularization, on average, improves the top-1 accuracy of DoReFa by 4.8\%.
%
%
The reduction in the bitwidth, on average, leads to 77.5\% reduction in the energy consumed during the execution of these networks.

\if 0

One approach to reduce the intensity of the DNN computations is to reduce the complexity of each operation.
To this end, quantization of neural networks is one of the paths forward as it reduces the bitwidth of the operations as well as the data footprint~\cite{Hubara2017QNN, bitfusion, Judd2016StripesBD}.
Nonetheless, quantization can lead to significant loss in accuracy if not employed with diligence.
To prudently preserve accuracy of DNNs while benefiting from quantization, following two fundamental problems need to be addressed.
(1) Learning techniques need to be developed that can train or tune quantized neural networks given a level of quantization for each layer.
(2) Algorithms need to be designed that can discover the appropriate level of quantization for each layer while considering the accuracy~\cite{DBLP:journals/corr/abs-1811-01704}.  
This paper takes on the first challenge.

We propose a novel sinusoidal regularization, dubbed \textit{SinReQ}, for low bitwidth deep quantized training.
Sinusoidal functions have inherent periodic minima which can be exploited to push the weights towards required quantization levels.
Thus, by adding a sinusoidal regularizer to the original objective function of any neural network, our method automatically yields semi-quantized weights with bitwidths that are defined at the start of training.
%
%
%
%
To preserve accuracy, several works have shown that the bitwidth varies significantly across DNNs and may be adjusted for each layer individually~\cite{Zhou2016DoReFaNetTL, Zhu2016TrainedTQ, Li2016TernaryWN, Mishra2017WRPNWR}.
Over-quantizing a more sensitive layer can result in stringent restrictions on subsequent layers to compensate and maintain accuracy.
To this end, our method gives flexibility for any arbitrary-bit quantization by setting the periodicity of the regularizer accordingly while also allowing all the layers to be quantized in parallel.
The proposed regularization is optimized via gradient descent within conventional backpropagation in standard network training.
Incorporating \textit{SinReQ} into the optimization objective guides the algorithm to find the set of weights that jointly minimizes the accuracy loss and the underlying quantization error for each layer based on the respective bitwidths.
Considering layer-wise specific quantization bitwidths, the proposed regularization can be applied for training a model from scratch, or for finetuing a pretrained model.
We evaluate our \textit{SinReQ} on various datasets (CIFAR10, SVHN) with different networks. Preliminary results show that. 

The contributions of this paper are as follows.

\fi 
\section{Joint Learning of Layer Bitwidths and Quantized Parameters}
\label{sec:method}
\begin{figure}
  \centering 
  \includegraphics[width=0.5\textwidth]{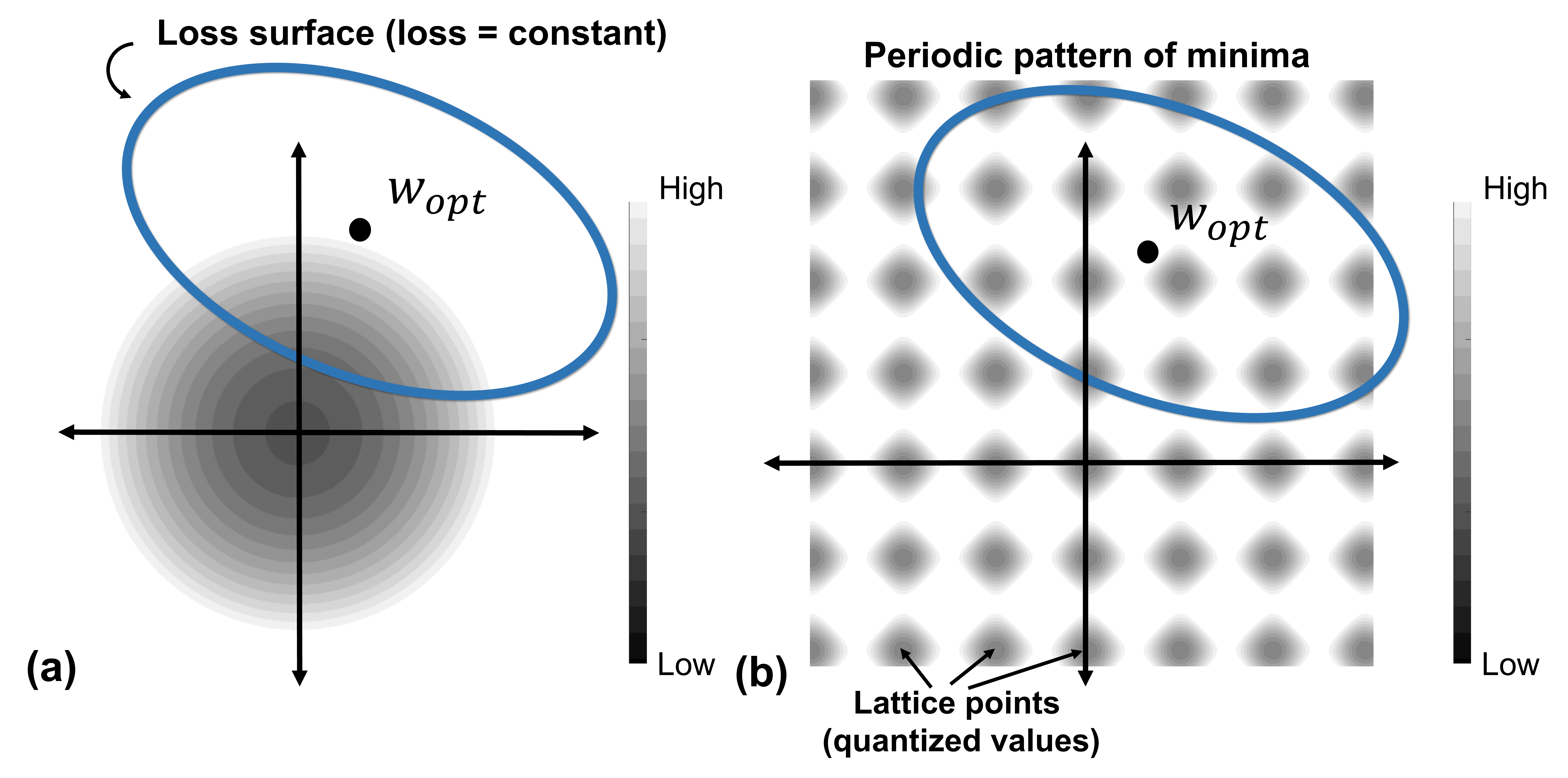}
  \vspace{-0.65cm}
  \caption{Geometrical sketch for a hypothetical loss surface (original task loss to be minimized) and an extra regularization term in 2-D weight space visualizing the induced 2-D gradients for (a) weight decay, and (b) \sinreq respectively. $w_{opt}$ is the optimal solution considering the original task loss alone.}
  \label{fig:reg}
  \vspace{-0.65cm}
\end{figure}
\begin{figure*}
  \centering 
  \includegraphics[width=0.9\textwidth]{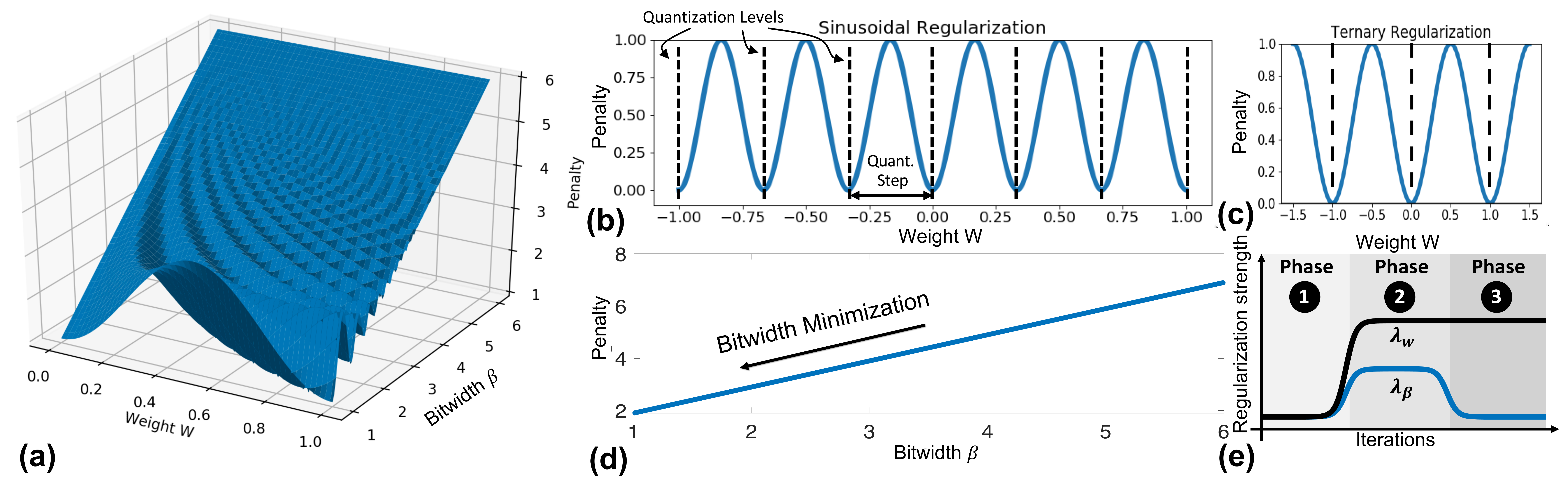}
    \vspace{-0.5cm}
  \caption{(a) 3-D visualization of the proposed generalized objective \sinreq. (b) \sinreq 2-D profile, $w.r.t$ weights, adapting for arbitrary bitwidths, (c) example of adapting to ternary quantization. (d) \sinreq 2-D profile $w.r.t$ bitwidth. (e) Regularization strengths profiles, $\lambda_w$, and $\lambda_\beta$, across training iterations.}
  \label{fig:function}
  \vspace{-0.55cm}
\end{figure*}
Our proposed method \sinreq exploits weight regularization in order to automatically quantize a neural network while training. To that end, Sections~\ref{sec:loss} describes the role of regularization in neural networks and then Section~\ref{sec:sinreq} explains \sinreq in more details.
\subsection{Background}%
\label{sec:loss}
\niparagraph{Loss landscape of neural networks.}
Neural networks' loss landscapes are known to be highly non-convex and generally poorly understood. It has been empirically verified that loss surfaces for large neural networks have many local minima that are essentially equivalent in terms of test error \cite{DBLP:journals/corr/ChoromanskaHMAL14,DBLP:journals/corr/abs-1712-09913}. 
%
%
This opens up the possibility of adding soft constrains as extra custom objectives to optimize during the training process, in addition to the original objective (i.e., to minimize the accuracy loss). 
The added constraint could be with the purpose of increasing generalization performance or imposing some preference on the weights values.
%

%
\niparagraph{Regularization in neural networks.}
Neural networks often suffer from redundancy of parameterization and consequently they commonly tend to overfit.
Regularization is one of the commonly used techniques to enhance generalization performance of neural networks.
Regularization effectively constrains weight parameters by adding a term (regularizer) to the objective function that captures the desired constraint in a soft way. 
This is achieved by imposing some sort of preference on weight updates during the optimization process. 
As a result, regularization seamlessly leads to unconditionally constrained optimization problem instead of explicitly constrained which, in most cases, is much more difficult to solve.
%

%
\niparagraph{Classical regularization: weight decay.}
The most commonly used regularization technique is known as \textit{weight decay}, which aims to reduce the network complexity by limiting the growth of the weights,
see Figure \ref{fig:reg} (a). 
It is realized by adding a regularization term $R$ to the objective function $E$ that penalizes large weight values as follows:
 \begin{equation}
E(w) = E_o(w) + R(w) \quad\text{with}\quad
R(w) = \frac{\lambda}{2} \sum_{i}\sum_{j} {w^2 _{ij}}
 \end{equation}
where $w$ is the collection of all synaptic weights, $E_o$ is the original loss function, and $\lambda$ is a parameter governing how strongly large weights are penalized. The $j$-th synaptic weight in the $i$-th layer of the network is denoted by $w_{ij}$.

%
\subsection{\sinreq Regularization}%
\label{sec:sinreq}
\niparagraph{Proposed objective.}
Here, we propose our sinusoidal based regularizer, \sinreq, which consists of the sum of two terms defined as follows:
 \begin{equation}
  \hspace{-.8em}
  R(w;\beta) = \underbrace{\lambda_{w} \sum_{i}\sum_{j} {\frac {\sin^{2}\left({\pi w_{ij}{(2^{\beta_i}-1)}}\right)}{2^{\beta_i}}}}_\textrm{Weights quantization regularization} \quad+\quad\! \underbrace{\lambda_{\beta} \sum_{i} \beta_i}_{\hspace{-2em}\textrm{Bitwidth regularization}\hspace{-2em}} 
 \label{eq:sinreq}
 \end{equation}
where $\lambda_w$ is the weights quantization regularization strength which governs how strongly weight quantization errors are penalized, and $\lambda_\beta$ is the bitwidth regularization strength.
The parameter $\beta_i$ is proportional to the quantization bitwidth as will be further elaborated on below. 
Figure \ref{fig:function} (a) shows a 3-D visualization of our regularizer, $R$.
Figure \ref{fig:function} (b), (c) show a 2-D profile w.r.t weights ($w$), while (d) shows a 2-D profile w.r.t the bitwidth $(\beta)$.

\niparagraph{Periodic sinusoidal regularization.}
%
%
As shown in Equation~\eqref{eq:sinreq}, the first regularization term is based on a periodic function (sinusoidal) that provides a smooth and differentiable loss to the original objective, Figure \ref{fig:function} (b), (c). 
The periodic regularizer induces a periodic pattern of minima that correspond to the desired quantization levels. Such correspondence is achieved by matching the period to the quantization step ($1/(2^{\beta_i}-1)$) based on a particular number of bits ($\beta_i$) for a given layer $i$.
For the sake of simplicity and clarity, Figure~\ref{fig:reg}(a) and (b) depict a geometrical sketch for a hypothetical loss surface (original objective function to be minimized) and an extra regularization term in 2-D weight space, respectively. 
For weight decay regularization (Figure~\ref{fig:reg} (a)), the faded circle shows that as we get closer to the origin, the regularization loss is minimized. 
The point $w_{opt}$ is the optimum just for the loss function alone and the overall optimum solution is achieved by striking a balance between the original loss term and the regularization loss term.
In a similar vein, Figure \ref{fig:reg}(b) shows a representation of the proposed periodic regularization for a fixed bitwidth $\beta$. 
A periodic pattern of minima pockets are seen surrounding the original optimum point. 
The objective of the optimization problem is to find the best solution that is the closest to one of those minima pockets where weight values are nearly matching the desired quantization levels, hence the name quantization-friendly.

\niparagraph{Quantizer.}
Before delving into how our sinusoidal regularizer is used for quantization, we discuss how quantization works. 
Consider a floating-point variable $w_{f}$ to be mapped into a quantized domain using $(b+1)$ bits.
Let $\mathcal{Q}$ be a set of $(2k + 1)$ quantized values, where $k=2^b-1$.
Considering linear quantization, $\mathcal{Q}$ can be represented as $\left\{ -1, -\frac{k-1}{k}, . . ., -\frac{1}{k}, 0, \frac{1}{k}, . . ., \frac{k-1}{k}, 1 \right\}$, where $\frac{1}{k}$ is the size of the quantization bin.
Now, $w_{f}$ can be mapped to the $b$-bit quantization~\cite{Zhou2016DoReFaNetTL} space as follows:
\begin{equation}
w_{qo} = 2\times\quantize_b\left(\frac{\tanh(w_f)}{2\max(|\tanh(W_f)|)} + \frac{1}{2}\right) - 1
\label{eq:quant}
\end{equation}
where $\quantize_b(x) = \frac{1}{2^b-1}\round((2^b-1)x)$, $w_f$ is a scalar, $W_f$ is a vector, and $w_{qo}$ is a scalar in the range $[-1,1]$. 
Then, practically, a scaling factor $c$ is determined per layer to map the final quantized weight $w_{q}$ into the range $[-c,+c]$.
As such, $w_{q}$ takes the form $cw_{qo}$, where $c>0$, and $w_{qo} \in \mathcal{Q}$.

\niparagraph{Learning the sinusoidal period.} 
The parameter $\beta_i$ controls the period of the sinusoidal regularizer for layer $i$, thereby \textbf{$\beta_i$} is directly proportional to the actual quantization bitwidth \textbf{($b_i$)} of layer $i$ as follows: 
\begin{equation}
b_i = \ceil*{\beta_i } , \ \ \  and \ \ \ \alpha_i = b_i / \beta_i \label{eq:beta_alpha}
 \end{equation}
where \textbf{$\alpha_i \in \mathbb{R^+}$} is a scaling factor. 
Note that $b_i \in \mathbb{Z}$ is the only discrete parameter, while $\beta_i \in \mathbb{R^+}$ is a continuous real valued variable, and $\ceil*{.}$ is the ceiling operator.
%
%
While the first term in Equation~(\eqref{eq:sinreq}) is only responsible for promoting quantized weights, the second term enforces small bitwidths achieving a good accuracy-quantization trade-off.
The main insight here is that the sinusoidal period is a continuous valued parameter by definition.
As such, $\beta_i$ that defines the period serves as an ideal optimization objective and a proxy to minimize the actual quantization bitwidth $b_i$.
Therefore, \sinreq avoids the issues of gradient-based optimization for discrete valued parameters. 
Furthermore, the benefit of learning the sinusoidal period is two-fold.
First, it provides a smooth differentiable objective for finding minimal bitwidths.
Second, simultaneously learning the scaling factor ($\alpha_i$) associated with the found bitwidth.
%

%


%
\niparagraph{Putting it all together.}
Leveraging the sinusoidal properties, \sinreq learns the following two quantization parameters simultaneously: (i) a per-layer quantization bitwidth ($b_i$) along with (ii) a scaling factor ($\alpha_i$) through learning the period of the sinusoidal function.
Additionally, by exploiting the periodicity, differentiability, and the local convexity profile in sinusoidal functions \sinreq automatically propels network weights towards values that are inherently closer to quantization levels according to the jointly learned quantizer's parameters $b_i$, $\alpha_i$ defined in Equation~\eqref{eq:beta_alpha}.
%
%
These learned parameters can be mapped to the quantizer parameters explained for Equation~\eqref{eq:quant} in paragraph \textbf{Quantizer.}.
For $(b+1)$\footnote{the extra bit is the sign bit.} bits quantization, $k$ is set to $2^b-1$ and $c$ is set to $2^\alpha$.

\begin{figure}
  \centering 
  \includegraphics[width=0.45\textwidth]{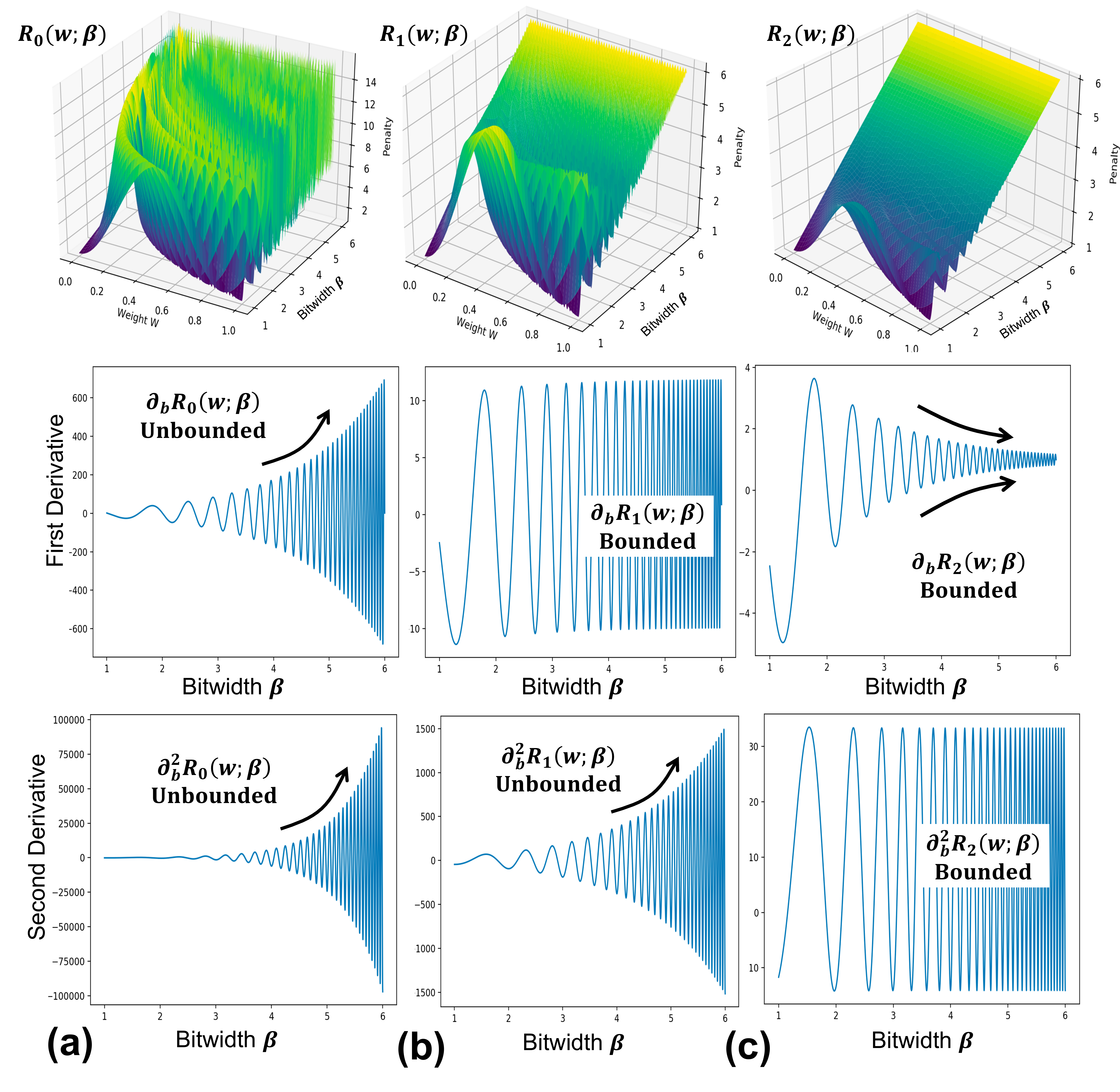}
    \vspace{-0.3cm}
  \caption{Visualization for three variants of the proposed regularization objective 
  		using different normalizations and their respective first and second derivatives with respect to $\beta$. (a) $R_0(w;\beta)$, (b) $R_1(w;\beta)$, and (c) $R_2(w;\beta)$.}
  \label{fig:deriv}
  \vspace{-0.5cm}
\end{figure}
\niparagraph{Bounding the gradients.} 
 The denominator in the first term of equation \eqref{eq:sinreq} is used to control the range of variation of the derivatives of the proposed regularization term with respect to $\beta$ and is chosen to limit vanishing and exploding gradients during training. To this end, we compared three variants of equation \eqref{eq:sinreq} with different normalization defined, for $k = 0$, $1$, and $2$, as:
\if 0
 \begin{gather}
 R_0(w;b) = \lambda_{w} \sum_{i}\sum_{j} {\sin^{2}\left({\pi w_{ij}{(2^{b_i}-1)}}\right)}+ {\lambda_{b} \sum_{i} b_i} \\
 R_1(w;b) = \lambda_{w} \sum_{i}\sum_{j} \frac{\sin^{2}\left({\pi w_{ij}{(2^{b_i}-1)}}\right)}{2^{b_i}}+ {\lambda_{b} \sum_{i} b_i} \\
 R_2(w;b) = \lambda_{w} \sum_{i}\sum_{j} \frac{\sin^{2}\left({\pi w_{ij}{(2^{b_i}-1)}}\right)}{4^{b_i}}+ {\lambda_{b} \sum_{i} b_i} 
\end{gather}   
\fi
\if 0
 \begin{gather}
 R_0(w;\beta) = \lambda_{w} \sum_{i}\sum_{j} {\sin^{2}\left({\pi w_{ij}{(2^{\beta_i}-1)}}\right)}+ {\lambda_{\beta} \sum_{i} \beta_i} \\
 R_1(w;\beta) = \lambda_{w} \sum_{i}\sum_{j} \frac{\sin^{2}\left({\pi w_{ij}{(2^{\beta_i}-1)}}\right)}{2^{\beta_i}}+ {\lambda_{\beta} \sum_{i} \beta_i} \\
 R_2(w;\beta) = \lambda_{w} \sum_{i}\sum_{j} \frac{\sin^{2}\left({\pi w_{ij}{(2^{\beta_i}-1)}}\right)}{4^{\beta_i}}+ {\lambda_{\beta} \sum_{i} \beta_i}
 \end{gather}   
\fi
\begin{gather}
 R_k(w;\beta) = \lambda_{w} \sum_{i}\sum_{j} \frac{\sin^{2}\left({\pi w_{ij}{(2^{\beta_i}-1)}}\right)}{2^{k \beta_i}}+ {\lambda_{\beta} \sum_{i} \beta_i}
\end{gather}   
Figure \ref{fig:deriv} (a), (b), (c) provide a visualization on how each of the proposed scaled variants impact the first and second derivatives.
For $R_0$ and $R_2$, there are regions of vanishing or exploding gradients. Only the regularization $R_1$ (the proposed one) is free of such issues. 
%
%

\niparagraph{Setting the regularization strengths.}
The convergence behavior depends on the setting of the regularization strengths $\lambda_{w}$ and $\lambda_{\beta}$.
Since our proposed objective seeks to learn multiple quantization parameterization in conjunction, we divide the learning process into three phases, as shown in Figure \ref{fig:function} (e).
In Phase (\ballnumber{1}), we primarily focus on optimizing for the original task loss $E_0$. 
Initially, the small $\lambda_{w}$ and $\lambda_{\beta}$ values allow the gradient descent to explore the optimization surface freely.
As the training process moves forward, we transition to phase (\ballnumber{2}) where the larger $\lambda_{w}$ and $\lambda_{\beta}$ gradually engage both the weights quantization regularization and the bitwidth regularization, respectively. 
Note that, for this to work, the strength of the weights quantization regularization $\lambda_{w}$ should be higher than the strength of the bitwidth regularization $\lambda_{\beta}$ such that a bitwidth per layer could be properly evaluated and eventually learned during this phase.
After the bitwidth regularizer converges to a bitwidth for each layer, we transition to phase (\ballnumber{3}), where we fix the learned bitwidths and gradually decay $\lambda_{\beta}$ while we keep $\lambda_{w}$ high.
In our experiments, we choose $\lambda_{w}$ and $\lambda_{\beta}$ such that the original loss and the penalty terms have approximately the same
magnitude.
%
It is worth noting that this way of progressivly setting the regularization strenghts across a multi-phase optimization resembles the settings of classical optimization algorithms, e.g. simulated annealing, where the temperature is progressively decreased from an initial positive value to zero or transitioning from exploration to exploitation.
The mathematical formula used to generate $\lambda_{w}$ and $\lambda_{\beta}$ profiles across iterations can be found in the appendix (Fig. 9).

%

%
\if 0
\begin{figure*}
  \centering 
  \includegraphics[width=0.8\textwidth]{figs/sinreq_learn.pdf}
  \caption{(a) Generalized SinReQ profile adapting for arbitrary bitwidths including binary and ternary quantization. (b) and (c) depict a geometrical sketch for a hypothetical loss surface (original objective function to be minimized) and an extra regularization term in 2-D weight space for weight decay and SinReQ respectively. $w_{opt}$ is the optimal point just for the loss function alone. }
  \label{fig:sinreq_learn}
  \vspace{-0.5cm}
\end{figure*}
\begin{figure}
  \centering 
  \includegraphics[width=0.4\textwidth]{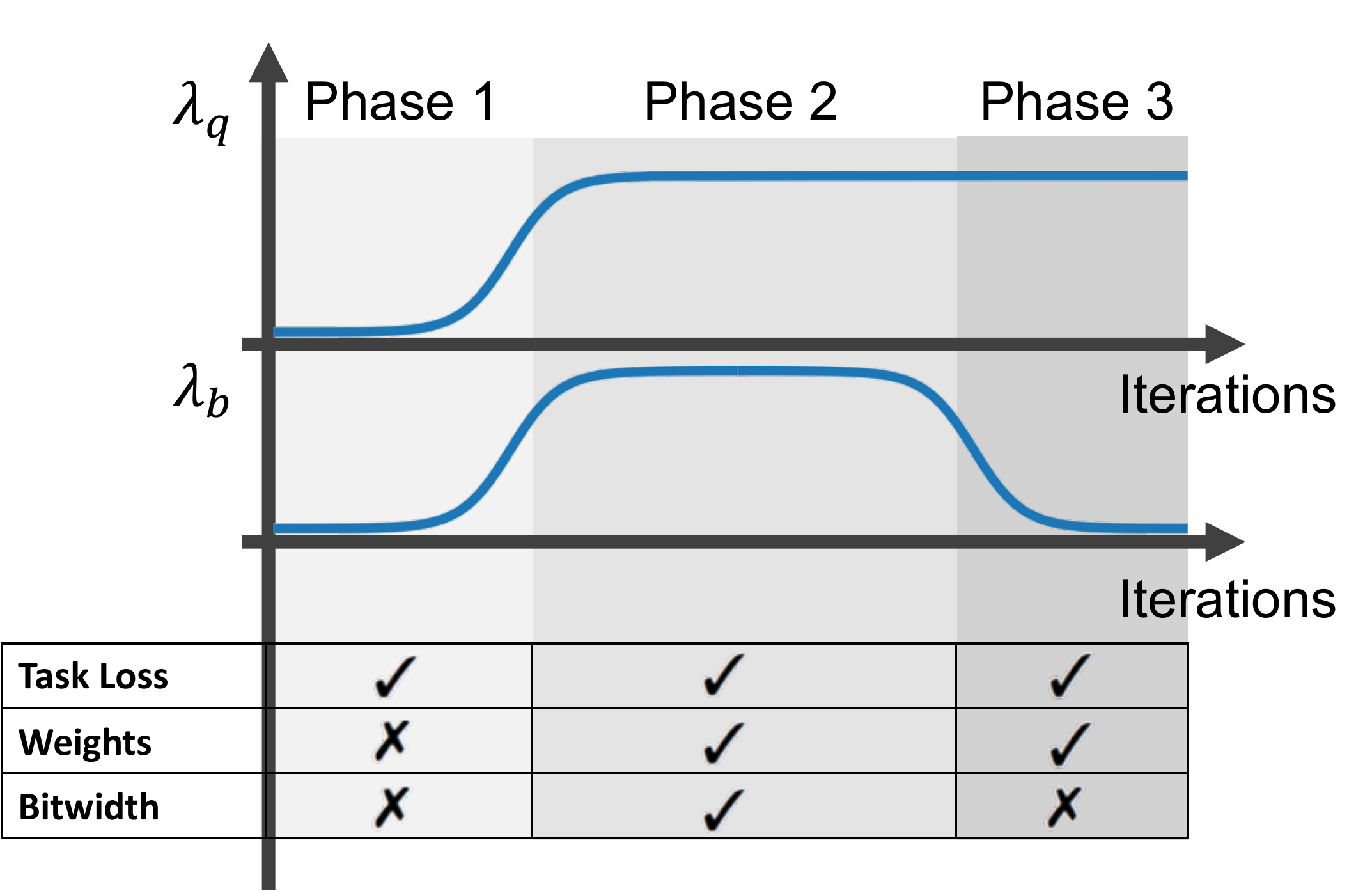}
  \caption{(a) Generalized SinReQ profile adapting for arbitrary bitwidths including binary and ternary quantization. (b) and (c) depict a geometrical sketch for a hypothetical loss surface (original objective function to be minimized) and an extra regularization term in 2-D weight space for weight decay and SinReQ respectively. $w_{opt}$ is the optimal point just for the loss function alone. }
  \label{fig:sinreq_learn}
  \vspace{-0.5cm}
\end{figure}
\if 0
\begin{figure}
  \centering 
  \includegraphics[width=0.4\textwidth]{figs/waves.pdf}
  \caption{(a) Generalized SinReQ profile adapting for arbitrary bitwidths including binary and ternary quantization. }
  \label{fig:sinreq_learn}
  \vspace{-0.5cm}
\end{figure}
\fi 
\begin{figure}
  \centering 
  \includegraphics[width=0.4\textwidth]{figs/fig5_deriv.pdf}
  \caption{(a) Generalized SinReQ profile adapting for arbitrary bitwidths including binary and ternary quantization. }
  \label{fig:deriv}
  \vspace{-0.5cm}
\end{figure}
\fi 

\vspace{-0.4cm}
\section{Theoretical Analysis}
The results of this section are motivated as follow. 
Intuitively, we would like to show that the global minima of $E = E_0 + R$ are very close to the minima of $E_0$ that minimizes $R$. In other words, we expect to extract among the original solutions, the ones that are most prone to be quantized.
To establish such result, we will not consider the minima of $E = E_0 + R$, but the sequence $S_n$ of minima of $E_n = E_0 + \delta_n R$ defined for any sequence $\delta_n$ of real positive numbers. The next theorem shows that our intuition holds true, at least asymptotically with $n$ provided $\delta_n \to 0$.
\begin{theorem}\label{SetOfMinimaConverge}
Let $E_0,R:\mathbb{R}^n\rightarrow[0,\infty)$ be continuous and assume that the set $S_{E_0}$ of the global minima of $E_0$ is non-empty and compact. As $S_{E_0}$ is compact, we can also define $S_{E_0,R} \subseteq S_{E_0}$ as the set of
minima of $E_0$ which minimizes $R$. Let $\delta_n$ be a sequence of real positive numbers, define $E_n = E_0 + \delta_n R$ and the sequence $S_n = S_{E_n}$ of the global minima of $E_n$.
Then, the following holds true:
\begin{enumerate}
\item If $\delta_n\rightarrow 0$ and $S_n \rightarrow S_*$, then $S_*\subseteq S_{E_0,R}$,

\item If $\delta_n\rightarrow 0$ then there is a subsequence $\delta_{n_k}\rightarrow 0$ and a non-empty set $S_*\subseteq S_{E_0,R}$ so that $S_{n_k}\rightarrow S_*$,
\end{enumerate}
where the convergence of sets, denoted by $S_n \rightarrow S_*$, is defined as the
convergence to $0$ of their Haussdorff distance, i.e., $\displaystyle \lim_{n \to \infty} d_H(S_n, S_*) = 0$. 
\end{theorem}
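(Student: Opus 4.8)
The plan is to treat this as an \emph{argmin--stability} statement (a $\Gamma$-convergence phenomenon): perturbing the minimization of $E_0$ by a vanishing multiple of $R$ should select, among the global minimizers of $E_0$, exactly those that also minimize $R$. Throughout I write $m_0 = \min E_0$ (attained since $S_{E_0}$ is non-empty) and $R^* = \min_{x \in S_{E_0}} R(x)$ (attained since $R$ is continuous and $S_{E_0}$ is compact), so that $S_{E_0,R} = \{x \in S_{E_0} : R(x) = R^*\}$ and in particular $S_{E_0,R} \neq \emptyset$. The heart of the argument is a pair of one-line inequalities coming from optimality; the compactness machinery is only needed to manufacture the convergent subsequence in part~(2).

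For part~(1), assume $\delta_n \to 0$ and $d_H(S_n, S_*) \to 0$. Fix $x_* \in S_*$; by the definition of Hausdorff convergence I can choose $x_n \in S_n$ with $x_n \to x_*$ (since $\mathrm{dist}(x_*, S_n) \le d_H(S_n,S_*) \to 0$). Fixing any $q \in S_{E_0,R}$, optimality of $x_n$ for $E_n$ gives
\begin{equation}
E_0(x_n) + \delta_n R(x_n) = E_n(x_n) \le E_n(q) = m_0 + \delta_n R^* .
\end{equation}
Since $R \ge 0$ and $\delta_n > 0$, dropping the term $\delta_n R(x_n)$ yields $E_0(x_n) \le m_0 + \delta_n R^*$; letting $n \to \infty$ and using continuity of $E_0$ together with the global bound $E_0 \ge m_0$ forces $E_0(x_*) = m_0$, i.e. $x_* \in S_{E_0}$. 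Returning to the same inequality and this time using $E_0(x_n) \ge m_0$, I get $\delta_n R(x_n) \le \delta_n R^*$, hence $R(x_n) \le R^*$ after dividing by $\delta_n > 0$; continuity gives $R(x_*) \le R^*$, while $x_* \in S_{E_0}$ gives $R(x_*) \ge R^*$. Thus $R(x_*) = R^*$ and $x_* \in S_{E_0,R}$, proving $S_* \subseteq S_{E_0,R}$.

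For part~(2), the task is to \emph{produce} a convergent subsequence of $(S_n)$ and then quote part~(1). First I would establish that the minimizers exist and are trapped in a common compact set: choosing $\varepsilon_0 > 0$ so that $K := \{E_0 \le m_0 + \varepsilon_0\}$ is compact, then for all $n$ with $\delta_n R^* < \varepsilon_0$ one has $E_n > m_0 + \varepsilon_0 \ge \inf_{\mathbb{R}^n} E_n$ on $K^c$ (because $E_n \ge E_0$ there, while $\inf E_n \le E_n(q) = m_0 + \delta_n R^*$), so every global minimizer lies in $K$; since $E_n$ is continuous on the compact $K$, the set $S_n = E_n^{-1}(\min E_n)$ is non-empty and compact with $S_n \subseteq K$ for all large $n$. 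By the Blaschke selection theorem the hyperspace of non-empty compact subsets of the fixed compact $K$, under $d_H$, is itself compact, so some subsequence satisfies $S_{n_k} \to S_*$ with $S_*$ non-empty and compact; applying part~(1) to this subsequence (for which still $\delta_{n_k} \to 0$) gives $S_* \subseteq S_{E_0,R}$, as claimed.

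The main obstacle is precisely this existence/localization step: the bare hypotheses (only $S_{E_0}$ compact) do \textbf{not} guarantee that $E_n = E_0 + \delta_n R$ attains its infimum, because a non-coercive $E_0$ can admit minimizing sequences escaping to infinity along which $R \to 0$, making $S_n$ empty and part~(2) vacuous. The clean remedy is the mild coercivity used above, namely compactness of a sublevel set $\{E_0 \le m_0 + \varepsilon_0\}$; this is automatic in the quantization setting, where after the $\tanh$ normalization in Equation~\eqref{eq:quant} the weights range over a compact domain, so existence and localization are immediate and $K$ may be taken to be the whole domain. The remaining care points are routine: verifying that Hausdorff convergence legitimately supplies the approximating sequence $x_n \to x_*$, that the division by $\delta_n$ (which is exactly what extracts the second-order selection of $R$-minimizers) is valid because $\delta_n > 0$, and that the Blaschke limit of non-empty sets is again non-empty.
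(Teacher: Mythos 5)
Your part (1) is essentially the paper's own argument: pick minimizers $x_n \in S_n$ converging to $x_* \in S_*$, use the sandwich $E_0(x_n) \le E_n(x_n) \le E_n(q) = m_0 + \delta_n R^*$ to place $x_*$ in $S_{E_0}$, then the reverse inequality $m_0 + \delta_n R^* \ge E_n(x_n) \ge m_0 + \delta_n R(x_n)$ and division by $\delta_n > 0$ to get $R(x_*) \le R^*$; only the notation ($m_0, R^*$ versus the paper's $\lambda, \mu$) differs. Where you genuinely add value is part (2), which the paper dispatches with a one-line appeal to ``standard theory of Hausdorff distance on compact metric spaces.'' You are right that this appeal is not actually available under the stated hypotheses: continuity of $E_0, R \ge 0$ together with compactness of $S_{E_0}$ does \emph{not} force $S_n$ to be non-empty. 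The failure is concrete: in dimension one take $E_0(x) = x^2 e^{-x^2}$ and $R(x) = e^{-x^2}$; then $S_{E_0} = \{0\}$ is non-empty and compact, yet $E_n(x) = (x^2 + \delta_n)e^{-x^2}$ is strictly positive with infimum $0$ attained only in the limit $|x|\to\infty$, so $S_n = \emptyset$ for every $n$ and no subsequence can Hausdorff-converge to a non-empty $S_*$. Your repair --- assuming a compact sublevel set $\{E_0 \le m_0 + \varepsilon_0\}$, showing it traps every $S_n$ for $\delta_n R^* < \varepsilon_0$, and then invoking Blaschke selection on the hyperspace of non-empty compact subsets of that fixed compact set --- is exactly the right one, and it recovers precisely the coercivity hypothesis that the paper's more detailed appendix version of this theorem imposes but that the main-text statement silently dropped. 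In short: part (1) reproduces the paper's proof; part (2) is more careful than the paper's and correctly exposes a (mild but real) gap in the theorem's hypotheses as stated.
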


\begin{proof}
For the first statement, assume that $S_{n}\rightarrow S_*$. We wish to show that $S_{*}\subseteq S_{E_0,R}$. Assume that $x_{n}$ is a sequence of global minima of $F+\delta_{n}G$ converging to $x_*$. It suffices to show that $x_*\in S_{E_0,R}$. First let us observe that $x_*\in S_{E_0}$. Indeed, let \[\lambda=\inf_{x\in \mathbb{R}^n} E_0(x)\] and assume that $x\in S_{E_0}$. Then, 
\[\lambda\leq E_0(x_{n})\leq (E_0+\delta_n R)(x_{n})\leq (E_0+\delta_n R)(x)=\underbrace{\lambda+\delta_n R(x)}_{\rightarrow \lambda}.\] Thus, since $E_0$ is continuous and $x_n\rightarrow x_*$ we have that $E_0(x_*)=\lambda$ which implies $x_*\in S_{E_0}$. Next, define \[\mu=\inf_{x\in S_{E_0}} R(x).\] Let $\hat x\in S_{E_0,R}$ so that $R(\hat x)=\mu$. Now observe that, by the minimality of $x_n$ we have that \[\lambda+\delta_n\mu=(E_0+\delta_n R)(\hat x)\geq (E_0+\delta_n R)(x_n)\geq \lambda+\delta_n R(x_n)\] Thus, 
$R(x_n)\leq \mu$ for all $n$. Since $R$ is continuous and $x_n\rightarrow x_*$ we have that $R(x_*)\leq \mu$ which implies that $R(x_*)=\mu$ since $x_*\in S_{E_0}$. Thus, $x_*\in S_{E_0,R}$.
The second statement follows from the standard theory of Hausdorff distance on compact metric spaces and the first statement. 
\end{proof}
\vspace{-0.3cm}
Theorem \ref{SetOfMinimaConverge} implies that by decreasing the strength of $R$, one recovers the subset of the original solutions that achives the smallest quantization loss. In practice, we are not interested in global minima, and we should not decrease much the strength of $R$. In our context, Theorem \ref{SetOfMinimaConverge} should then be understood as a proof of concept on why the proposed approach leads the expected result. Experiments carried out in the next section will support this claim.
For the interested reader, we provide a more detailed version of the above analysis in the Appendix B.
\if 0
\subsection{Motivation}
The results of this section are motivated by the following question. 
\begin{question}
Suppose that a function $F:\mathbb{R}^n\rightarrow [0,\infty)$ has many global minima and that $Q\subset \mathbb{R}^n$ is closed. How do we isolate the global minima of $F$ that are closest to $Q$ without actually computing the full set of global minima of $F$?
\end{question}
Intuitively, we would like to show that if $\epsilon>0$ is very small, then the global minima of the function
\[F(x)+\epsilon d(x,Q)\] are very close to the global minima of $F$ closest to $Q$. To achieve this we will have to introduce first the concept of convergence of sets and then we will show that our intuition is correct by proving that the set of global minima to the above relaxed function converges to a subset of global minima of $F$ closest to $Q$. 

\subsection{Relevant Definitions}
\begin{definition}
If $F:\mathbb{R}^n\rightarrow [0,\infty)$ satisfies $\lim_{|x|\rightarrow\infty}F(x)=+\infty$, we will say that $F$ is coercive.  
\end{definition}
\begin{definition}
For a coercive function $F:\mathbb{R}^n\rightarrow [0,\infty)$ we let $S_{F}=\{x\in\mathbb{R}^n: F(x)=\min_{y\in\mathbb{R}^n}F(y)\}$ be coercive. 
\end{definition}
\begin{lemma}
Assume that $F:\mathbb{R}^n\rightarrow [0,\infty)$ is continuous and coercive. Then $F$ has at least one global minimum. That is, $S_F$ is non-empty. Furthermore, $S_F$ is a compact set. 
\end{lemma}

\begin{definition} 
Let $F,G:\mathbb{R}^n\rightarrow[0,\infty)$ are continuous and assume that $F$ is coercive. Define \[S_{F,G}=\{x\in S_F: G(x)=\inf_{y\in S_F}G(y)\},\] the minima of $F$ which minimize $G$ among the minima of $F$.

\end{definition}

\begin{definition}
Let $Q\subset\mathbb{R}^n$ be a closed set and assume that $x\in\mathbb{R}^n$. Define the distance from $x$ to the set $Q$ to be \[d(x,Q)=\inf_{y\in Q} \|x-y\|.\] Observe that since $Q$ is a closed set we have that $x\in Q$ if and only if $d(x,Q)=0$ and otherwise $d(x,Q)>0$. 
\end{definition}

\begin{definition}
Let $A,B\subset\mathbb{R}^n$ be compact sets. We define the Hausdorff distance between $A$ and $B$ by \[d_{H}(A,B)=\max\{\sup_{x\in A}\; d(x,A), \sup_{y\in B}\; d(y,B)\}.\] Observe that $d_H(A,B)=0$ if and only if $A=B$. 
\end{definition}

\begin{definition}
Let $\{S_\delta\}_{\delta>0}$ be a family of compact subsets of $\mathbb{R}^n$. We say that $\lim_{\delta\rightarrow 0} S_\delta=S_*$ if \[\lim_{\delta\rightarrow 0}d_H(S_\delta,S_*)=0.\]
\end{definition}

\begin{lemma}\label{ConvergenceLemma}
Let $S_\delta$ be a family of compact subsets of $\mathbb{R}^n$, then $\lim_{\delta\rightarrow 0} S_\delta =S_*$ if and only if the following two conditions hold.
\begin{enumerate}
\item If $x_\delta\in S_\delta$ converges to $x$, then $x\in S_*$
\item For every $x\in S_*$, there exists a family $x_\delta\in S_\delta$ with $x_\delta\rightarrow x$.  
\end{enumerate}
\end{lemma}

The lemma is just an exercise in the definition. 

\subsection{Statement of the Theorem}

\begin{theorem}\label{SetOfMinimaConverge}
Let $F,G:\mathbb{R}^n\rightarrow[0,\infty)$ be continuous and assume that $F$ is coercive. Consider the sets $S_{F+\delta G}$, the set of points at which $F+\delta G$ is globally minimum. The following are true:
\begin{enumerate}
\item If $\delta_n\rightarrow 0$ and $S_{F+\delta_n G}\rightarrow S_*$, then $S_*\subset S_{F,G}$.

\item If $\delta_n\rightarrow 0$ then there is a subsequence $\delta_{n_k}\rightarrow 0$ and a non-empty set $S_*\subset S_{F,G}$ so that $S_{F+\delta_{n_k}G}\rightarrow S_*.$
\end{enumerate}

\end{theorem}

\begin{proof}
The second statement follows from the standard theory of Hausdorff distance on compact metric spaces and the first statement. For the first statement, assume that $S_{F+\delta_n G}\rightarrow S_*$. We wish to show that $S_{*}\subset S_{F,G}$. Assume that $x_{n}$ is a sequence of global minima of $F+\delta_{n}G$ converging to $x_*$. It suffices to show that $x_*\in S_{F,G}$. First let us observe that $x_*\in S_F$. Indeed, let \[\lambda=\inf_{x\in \mathbb{R}^n}F(x)\] and assume that $x\in S_F$. Then, 
\[\lambda\leq F(x_{n})\leq (F+\delta_n G)(x_{n})\leq (F+\delta_n G)(x)=\lambda+\delta_n G(x)\rightarrow \lambda.\] Thus, since $F$ is continuous and $x_n\rightarrow x_*$ we have that $F(x_*)=\lambda$ which implies $x_*\in S_{F}$. Next, define \[\mu=\inf_{x\in S_F} G(x).\] Let $\hat x\in S_{F,G}$ so that $G(\hat x)=\mu$. Now observe that, by the minimality of $x_n$ we have that \[\lambda+\delta_n\mu=(F+\delta_n G)(\hat x)\geq (F+\delta_n G)(x_n)\geq \lambda+\delta_n G(x_n)\] Thus, 
\[G(x_n)\leq \mu\] for all $n$. Since $G$ is continuous and $x_n\rightarrow x_*$ we have that $G(x_*)\leq \mu$ which implies that $G(x_*)=\mu$ since $x_*\in S_F$. Thus, $x_*\in S_{F,G}$. 
\end{proof}

\fi
\if 0
\begin{table}
	\centering
	\caption{Summary of results comparing state-of-the-art methods DoReFa, and WRPN with and without SinReQ for different neural networks. These accuracies are the average of 10 different runs.}	
	\includegraphics[width=0.9\linewidth]{figs/results_table_2.pdf}
	\label{table:results_summary}
\end{table}

\fi 
\begin{table*}[!ht]
	\centering
	\caption{Comparison with state-of-the-art quantization methods on ImageNet. 
			The `` W/A ''  values are the bitwidths of weights/activations.}
	\includegraphics[width=0.8\linewidth]{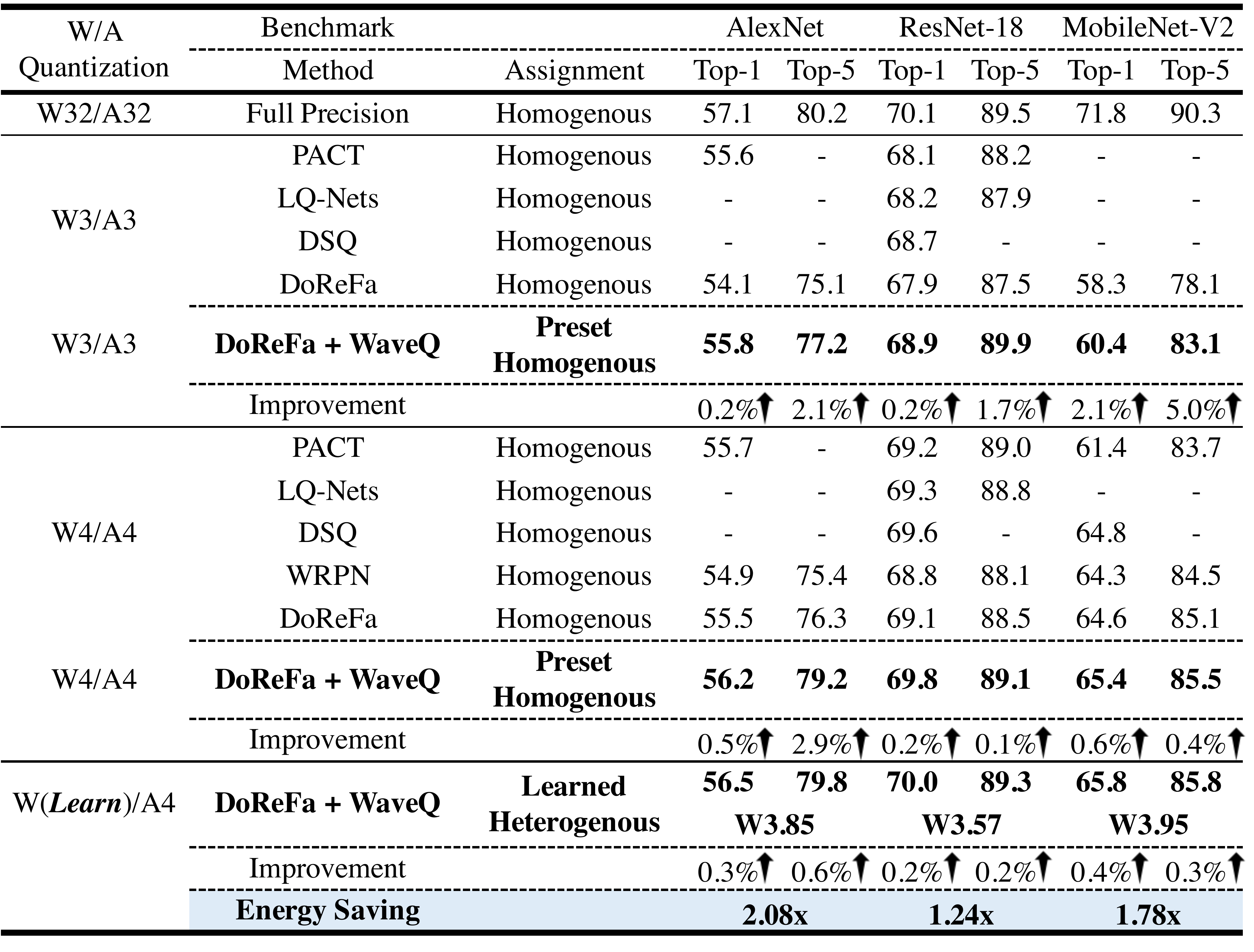}
	\label{table:SOTA}
\end{table*}
\if 0
\begin{table}[t]
	\centering
	\caption{Comparing accuracies of different networks using plain DoReFa vs DoReFa + \sinreq on fixed homogenous weight quantization.}
	\includegraphics[width=1.0\linewidth]{figs/table1_homo.pdf}
	\label{table:dorefa}
\end{table}
\begin{table}[t]
	\centering
	\caption{Comparing accuracies of different networks using plain WRPN vs WRPN + \sinreq on fixed homogenous weight quantization.}
	\includegraphics[width=1.0\linewidth]{figs/table2_homo.pdf}
	\label{table:wrpn}
\end{table}
\begin{table}
	\centering
	\caption{Comparing SinReQ+DoReFa to other methods on fixed homogenous weight and activation quantization.}	
	\includegraphics[width=1.0\linewidth]{figs/table3_homo.pdf}
	\label{table:SOTA}
\end{table}
\fi 
\vspace{-0.2cm}
\section{Experimental Results} 
\begin{figure*}[t]
  \centering
  \includegraphics[width=0.9\textwidth]{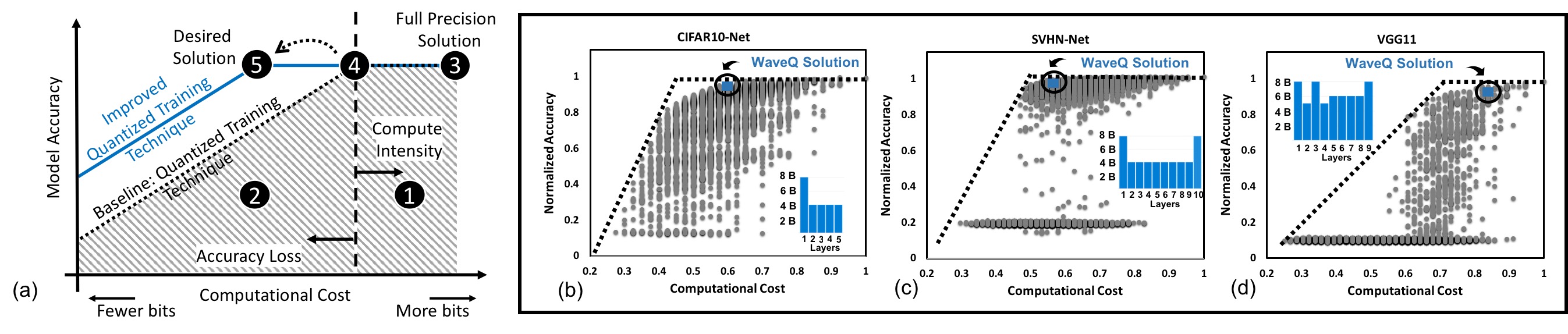}
  \caption{Quantization space in terms of computation and accuracy for (a) CIFAR-10, (b) SVHN, and (c) VGG-11}
  \label{fig:enum}
\end{figure*}
To demonstrate the effectiveness of our proposed \sinreq, we evaluated it on several deep neural networks with different image classification datasets (CIFAR10, SVHN, and ImageNet).
%
%
We provide results for two different types of quantization. 
First, we show quantization results for \emph{learned} heterogenous bitwidths using \sinreq and we provide different arguments to asses the quality of these learned bitwidth assignments.
Second, we further provide results assuming a \emph{preset} homogenous bitwidth assignment as a special setting of \sinreq, 
which in some cases is a practical assumption that might stem from particular hardware requirements or constraints.
Table~\ref{table:SOTA} provides a summary of the evaluated networks and datasets for both learned heterogenous bitwidths, and the special case of training preset homogenous bitwidth assignments.
We compare our proposed \sinreq method with PACT~\cite{Choi2018PACTPC}, LQ-Nets~\cite{DBLP:conf/eccv/ZhangYYH18}, DSQ~\cite{DBLP:journals/corr/abs-1908-05033},  and DoReFa, which are current state-of-the-art (SOTA) methods that show results with 3-, and 4-bit weight/activation quantization for various networks architectures (AlexNet, ResNet-18, and MobileNet).

%
%
%
%

%
\subsection{Experimental Setup}
%
We implemented our technique inside Distiller~\cite{neta_zmora_2018_1297430}, an open source framework for compression by Intel Nervana.
The reported accuracies for DoReFa and WRPN are with the built-in implementations in Distiller, which may not exactly match the accuracies reported in their respective papers.
However, an independent implementation from a major company provides an unbiased foundation for the comparisons.
We consider quantizing all convolution and fully connected layers, except for the first and last layers which may use higher precision. 

\subsection{Learned Heterogenous Bitwidth Quantization}
\if 0
\niparagraph{Need for heterogeneity in quantization.}
Deep neural networks, by construction, and their underlying training algorithms cast different properties on different layers as they learn different levels of features representations. First, it is widely known that neural networks are heavily overparameterized~\cite{NIPS2019_8847}; thus, different layers exhibit different levels of redundancy. 
Second, for a given initialization and upon training, each layer exhibits a distribution of weights (typically bell-shaped) each of which has a different dynamic range leading to different degrees of robustness to quantization error, hence, different precision requirements. 
Third, recent works empirically studied the layer-wise functional structure of overparameterized deep models and provided evidence for the heterogeneous characteristic of  layers. 
Furthermore, recent experimental work~\cite{DBLP:journals/corr/abs-1902-01996} also shows that layers can be categorized as either ``ambient'' or ``critical'' towards post-training re-initialization and re-randomization.
Another work~\cite{DBLP:conf/nips/FrommPP18} showed that a heterogeneously quantized versions of modern networks with the right mix of different bitwidths can match the accuracy of homogeneous versions with lower effective bitwidth on average.
All aforementioned points poses a requirement for methods to efficiently discover heterogenous bitwidths assignment for neural networks.
However, exploiting this possibility is manually laborious~\cite{Micikevicius2017MixedPT, Mishra2017ApprenticeUK,DBLP:journals/corr/abs-1812-00090} as to preserve accuracy, the bitwidth varies across individual layers and different DNNs~\cite{Zhou2016DoReFaNetTL, Zhu2016TrainedTQ, Li2016TernaryWN, Mishra2017WRPNWR}.
%
\fi 
\niparagraph{Quantization levels with \sinreq.}
As for quantizing both weights and activations, Table~\ref{table:SOTA} shows that incorporating \sinreq into the quantized training process yields best accuracy results outperforming PACT, LQ-Net, DSQ, and DoReFa with significant margins.
Furthermore, it can be seen that the learned heterogenous btiwidths yield better accuracy as compared to the preset 4-bit homogenous assignments, with lower, on average, bitwidh (3.85-, 3.57-, and 3.95- bits for AlexNet, ResNet-18, and MobileNet, respectively).

Figure \ref{fig:bars} (a), (b) (bottom bar graphs) show the learned heterogenous weight bitwidths over layers for AlexNet and ResNet-18, respectively.
As can be seen, \sinreq objective learning shows a spectrum of varying bitwidth assignments to the layers which vary from 2 bits to 8 bits with an irregular pattern.
These results demonstrate that the proposed regularization objective, \sinreq, automatically distinguishes different layers and their varying importance with respect to accuracy while learning their respective bitwidths.

Although, we can observe slight correlation of learning small bitwidths for layers with many parameters, e.g., fully connected layers, due to the strong inter-dependence between layers of neural networks, the resulting bitwidth assignments are generally complex, thereby there is no simple heuristic that can be deduced. 
As such, it is important to develop techniques to automatically learn a near-optimal bitwidth assignment for a given deep neural network.
To assess the quality of these bitwidths assignments, we conduct a sensitivity analysis to the relatively big networks, and 
a Pareto analysis on the DNNs for which we could populate the search space as shown below.

\niparagraph{Superiority of heterogenous quantization.}
Figure \ref{fig:bars} (a), (b) (top graphs) show various comparisons and sensitivity results for learned heterogenous bitwidth assignments for bigger networks (AlexNet and ResNet-18) that are infeasible to enumerate their respective quantization spaces.
Compared to 4-bit homogenous quantization, it can be seen that learned heterogenous assignments achieve better accuracy  with lower, on average, bitwidth $~3.85$ bits for AlexNet and $~3.57$ bits for ResNet-18.
This demonstrates that a homogenous (uniform) assignment of the bits is not always the desired choice to preserve accuracy.
Furthermore, Figure \ref{fig:bars} also shows that decrementing the learned bitwidth for any single layer at a time results in $0.44\%$ and $0.24\%$ reduction in accuracy on average (across all layers of the network) for AlexNet and ResNet-18, respectively, which further demonstrates the learning quality of \sinreq.

\niparagraph{Validation: Pareto analysis.}
Figure \ref{fig:enum} (a) shows a sketch of the multi-objective optimization problem of layer-wise quantization of a neural network showing the underlying design space and the different design components. 
Given a particular architecture and a training technique, different combinations of layer-wise quantization bitwidths form a network specific design space (possible solutions). 
The design space can be divided into two regions. 
Region \ballnumber{1} represents the set of combinations of layer-wise quantization bitwidths that preserves the accuracy.
On the other side, region \ballnumber{2} represents the set of all the remaining combinations of layer-wise quantization bitwidths that are associated with some sort of accuracy loss.
As number of bits (on average across layers) increases, the amount of compute increases (considering full precision solution (\ballnumber{3}) corresponds to the max amount of compute).
The objective is to find the least (on average) combination of bitwidths that still preserves the accuracy (i.e., solution \ballnumber{4}; the solution at the interface of the two regions).
As mentioned, that is for a particular training technique. Utilizing an improved quantized training technique modifies the envelop of the design space by expanding region \ballnumber{1}.
In other words, it pushes the desired solution (initially \ballnumber{4}) to a lower combination of bitwidths \ballnumber{5} (i.e., preserves the accuracy with lower bitwidths on average).
Note that this is just a sketch for illustration purposes and does not imply that the actual envelope of region \ballnumber{2} is linear.
%
Figure \ref{fig:enum} (b)-(d) depicts actual solutions spaces for three benchmarks (CIFAR10, SVHN, and VGG11) in terms of computation versus accuracy.
Each point on these charts is a unique combination of bitwidths that are assigned to the layers of the network.
The boundary of the solutions denotes the Pareto frontier and is highlighted by a dashed line.
The solution found by \sinreq is marked out using an arrow and lays on the desired section of the Pareto frontier where the compute intensity is minimized while keeping the accuracy intact, which demonstrates the quality of the obtained solutions.
%
%
It is worth noting that as a result of the moderate size of the three networks presented in this subsection, it was possible to enumerate the design space, obtain Pareto frontier and assess \sinreq quantization policy for each of the three networks. 
However, it is infeasible to do so for state-of-the-art deep networks (e.g., AlexNet and ResNet) which further stresses the importance of automation and efficacy of \sinreq.
\begin{figure*}[t]
  \centering 
  \includegraphics[width=0.85\textwidth]{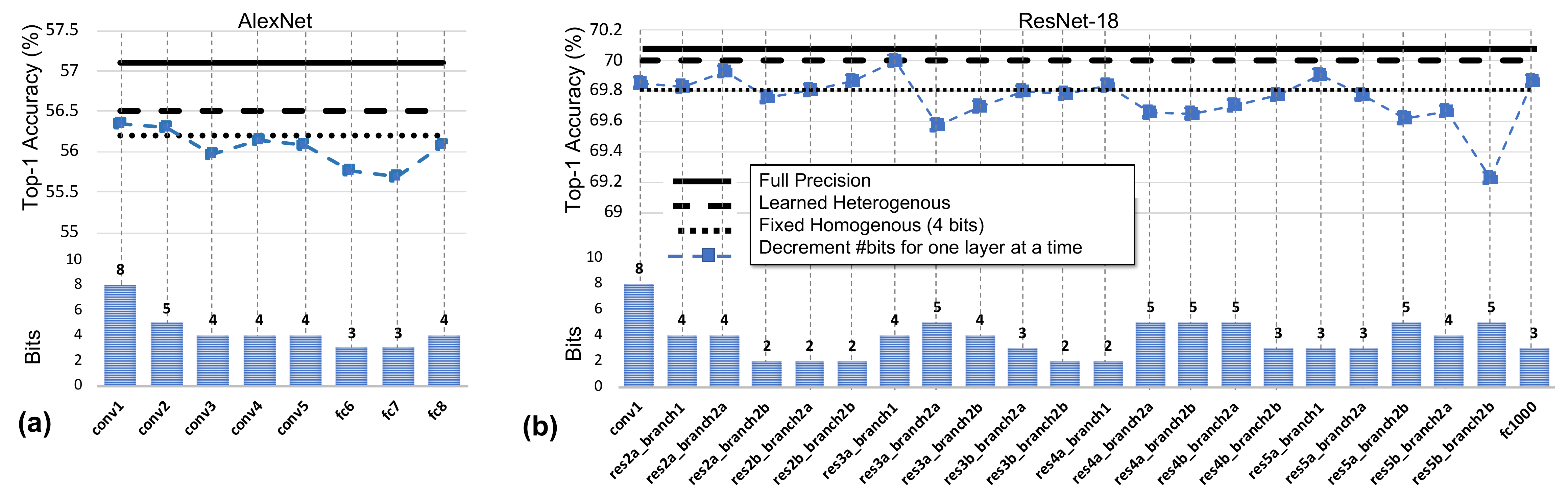}
  \caption{Quantization bitwidth assignments across layers. It can be seen that learned heterogenous assignments achieve 
  		(with lower on average bitwidth) better accuracy as compared to fixed homogenous assignments.
		(a) AlexNet (average bitwidth = 3.85 bits). (b) ResNet-18 (average bitwidth = 3.57 bits)}
  \label{fig:bars}
\end{figure*}
\begin{figure}
  \centering
  \includegraphics[width=0.45\textwidth]{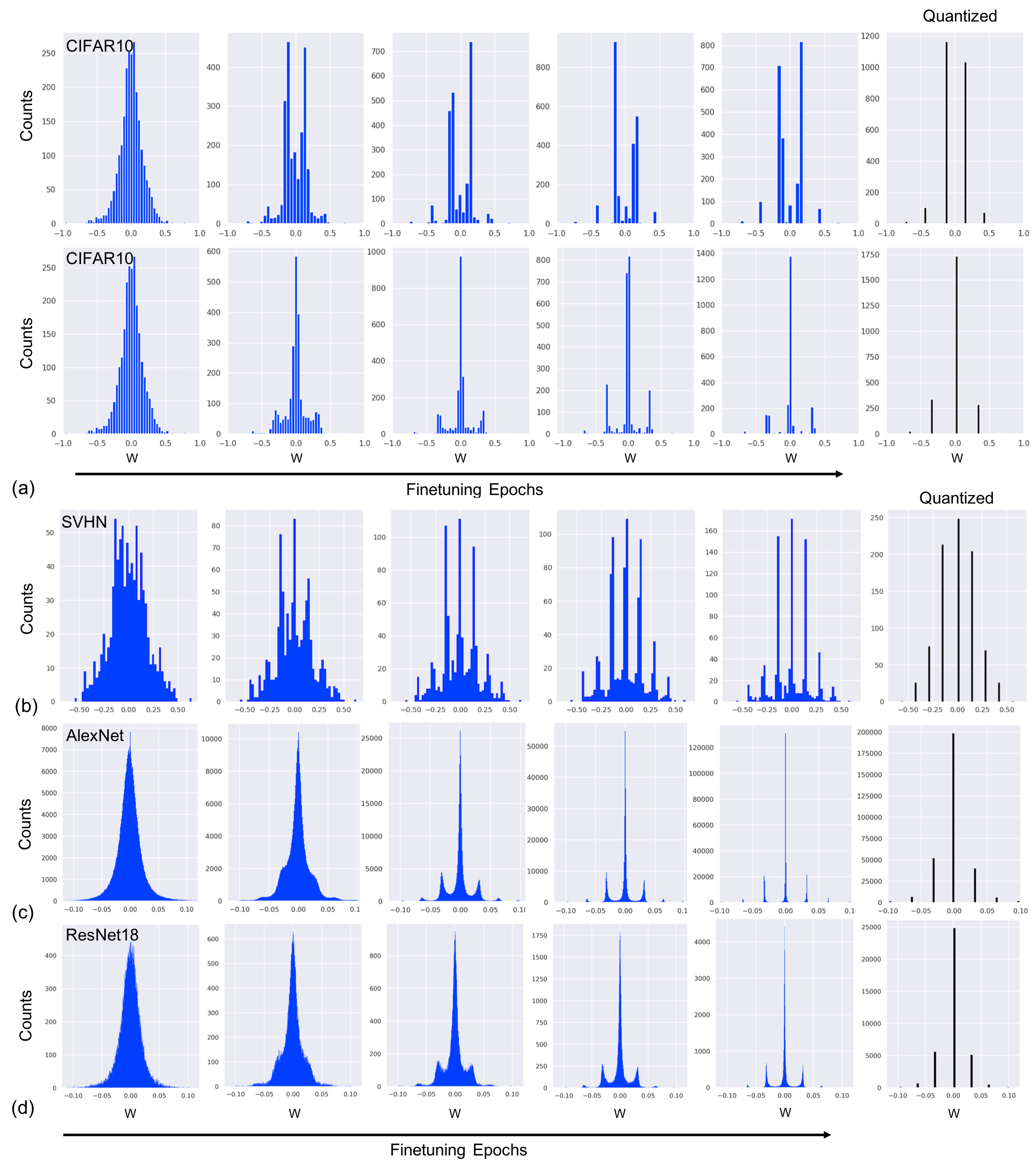}
  \caption{{Evolution of weight distributions over training epochs (with the proposed regularization) at different layers and bitwidths for different networks}. (a) CIFAR10, second convolution layer with 3 bits, top row: mid-rise type of quantization (shifting by half a step to exclude zero as a quantization level); bottom row: mid-tread type of quantization (zero is included as a quantization level). (b) SVHN, top row: first convolution layer with 4 bits quantization. (c) AlexNet, second convolution layer with 4 bits quantization, and (d) ResNet-18, , second convolution layer with 4 bits quantization.}
\vspace{-0.3cm}
  \label{fig:q_w_dist}
\end{figure}
\begin{table}
	\centering
	\caption{Comparing accuracies of different networks using plain WRPN, plain DoReFa and DoReFa + \sinreq on fixed homogenous weight quantization.}
	\includegraphics[width=1.0\linewidth]{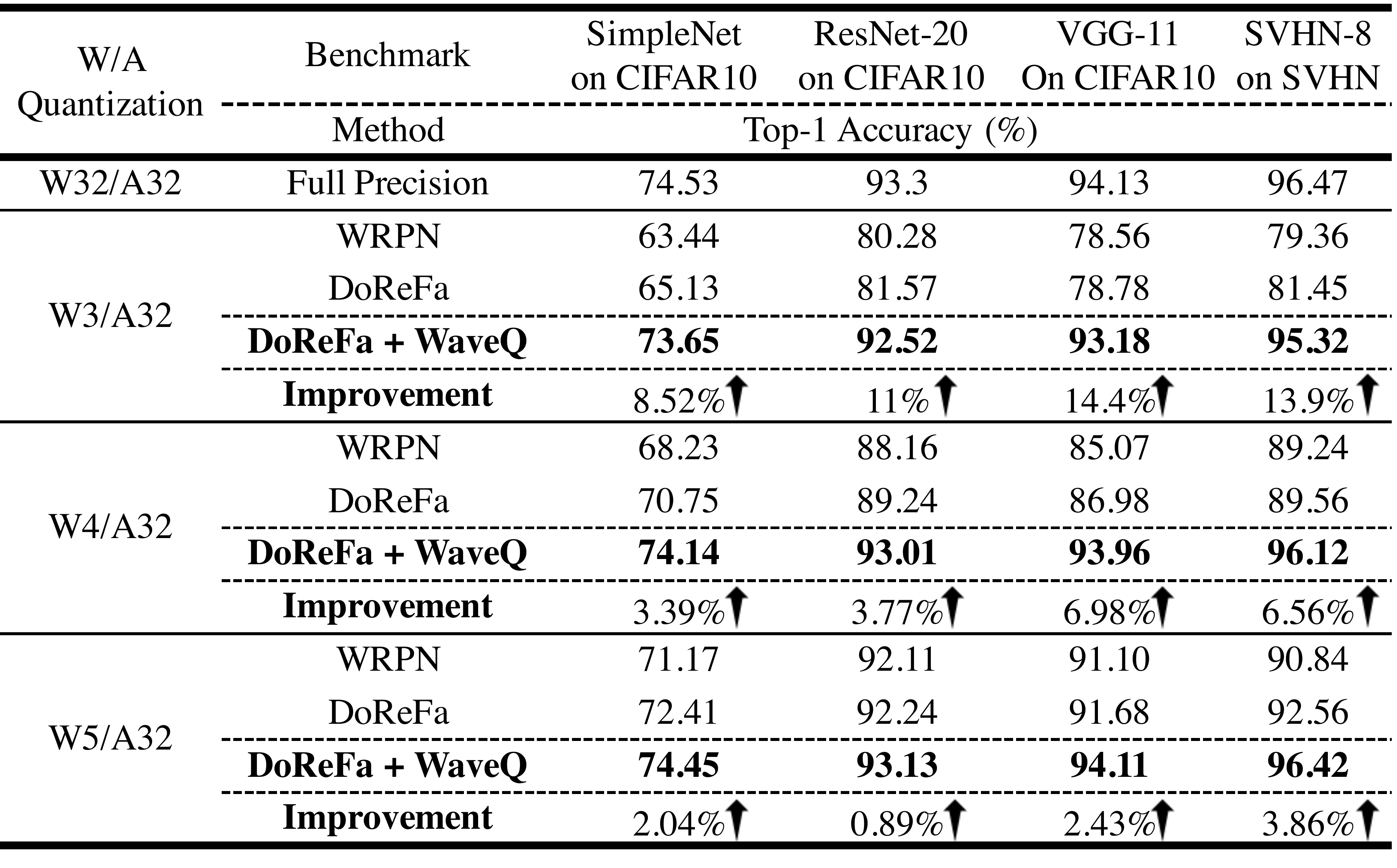}
	\label{table:dorefa_wrpn}
	\vspace{-0.6cm}
\end{table}
\if 0
\begin{table}[t]
	\centering
	\caption{Comparing accuracies of different networks using plain WRPN vs WRPN + \sinreq on fixed homogenous weight quantization.}
	\includegraphics[width=1.0\linewidth]{figs/table2_homo.pdf}
	\label{table:wrpn}
\end{table}
\begin{table}
	\centering
	\caption{Comparing SinReQ+DoReFa to other methods on fixed homogenous weight and activation quantization.}	
	\includegraphics[width=1.0\linewidth]{figs/table3_homo.pdf}
	\label{table:SOTA}
\end{table}
\fi 

\vspace{-0.1cm}
\niparagraph{Energy savings.}
To further demonstrate the energy savings of the solutions found by \sinreq, we evaluate it on Stripes~\cite{DBLP:conf/micro/JuddAHAM16}, 
a custom accelerator designed for DNNs, which exploits bit-serial computation to support flexible bitwidths for DNN operations.
As shown in Table~\ref{table:SOTA}, the reduction in the bitwidth, on average, leads to 77.5\% reduction in the energy consumed during the execution of these networks.

\subsection{Preset Homogenous Bitwidth Quantization}
Now, we consider a preset homogenous bitwidth quantization which can also be supported by the proposed \sinreq under special settings where we fix $\beta$ (to a preset bitwidth), thus only the first regularization term is engaged for weight quantization.
Table~\ref{table:dorefa_wrpn} shows results comparison of different networks (SimpleNet-5, ResNet-20, VGG-11, and SVHN-8) using plain WRPN, plain DoReFa and DoReFa + \sinreq considering preset 3-, 4-, and 5- bitwdith assignments.
As can be seen, 
These results concretely show the impact of incorporating \sinreq into existing quantized training techniques and how it outperforms previously reported accuracies of several SOTA methods.

\niparagraph{Semi-quantized weight distributions.}
Figure \ref{fig:q_w_dist} shows the evolution of weights distributions over fine-tuning epochs for different layers of CIFAR10, SVHN, AlexNet, and ResNet-18 networks. The high-precision weights form clusters and gradually converge around the quantization centroids as regularization loss is minimized along with the main accuracy loss.
\if 0
The rate of convergence to the target quantization levels depends on (i) the number of fine-tuning epochs, (2) the regularization strengths ($\lambda_w$, $\lambda_\beta$). 
It is worth noting that $\lambda_q$ is a hyper-parameter that controls the tradeoff between the accuracy loss and the regularization loss. Fixed value can be presumed ahead of training or fine-tuning, however careful setting of such parameter can yield optimum results. \citet{DBLP:journals/corr/abs-1809-00095} considers the regularization coefficient as a learnable parameter.
\fi
%
\if 0
\niparagraph{Arbitrary-bitwidth quantization.}
Considering the following sinusoidal regularizer, with $\step_q$ denoting the quantization step (the size of the quantization bin), and $\Delta$ is an offset.
 \begin{equation}\label{eq:reg}
R(W) = \lambda_{q} \sum\limits_{i} {\sin^{2}\left({\frac {\pi w_i + \Delta} {\step_q}}\right)} 
 \end{equation}
SinReQ provides generality in two aspects. First, the flexibility to adapt for arbitrary number of bits. 
The parameter $\step_q$ controls the periodicity of the sinusoidal function. Thus, for any arbitrary bitwidth ($b$), $\step_q$ can be tuned to match the respective quantization step. For uniform quantization:
 \begin{equation} 
\step_q = {1/(2^{b}-1)} 
 \end{equation}
Figure \ref{fig:q_w_dist} shows different examples of automatic gradual quantization of weights distributions at different bitwidths (3, 4, and 5 bits).

%
The second aspect of generality is the seamless accommodation for different quantization styles. There are two styles of uniform quantization: mid-tread and mid-rise. In mid-tread, zero is considered as a quantization level, while in mid-rise, quantization levels are shifted by half a step such that zero is not included as a quantization level. Ternary quantization, using $\{-1,0,1\}$, is an example of the former, while binary quantization is an example of the latter where two levels are used $\{-1,1\}$. 
Figure \ref{fig:q_w_dist} (a) shows the second convolution layer of CIFAR10 at 3 bits, top row: mid-rise type of quantization, and bottom row: mid-tread type of quantization.
%


\niparagraph{Layer-wise optimization.}
As different layers have different levels of sensitivity to the quantization bitwidth~\cite{DBLP:journals/corr/abs-1811-01704}, enabling layer-wise quantization opens the possibility for heterogenous bitwidth quantization and consequently more optimized quantized networks.
This can be achieved by adding a custom regularizer (as shown in equation \ref{eq:reg}) for each layer and sum over all layers. Then, we add the regularization losses of all layers to the main accuracy loss and pass the entire collective loss to the gradient-descent optimizer.

\if 0
\begin{table}
	\centering
	\caption{Comparison with VNQ results for LeNet and DenseNet considering ternary weight quantization.}	
	\includegraphics[width=0.8\linewidth]{figs/VNQ_comparison1.pdf}
	\label{table:VNQ}
\end{table}
\fi 
\fi 

\if 0
\niparagraph{Comparison to existing methods.}
%
%
We further assess the efficacy of \sinreq on boosting the performance of existing methods for training quantized networks, DoReFa~\cite{Zhou2016DoReFaNetTL}, and WRPN~\cite{Mishra2017WRPNWR}.
Table \ref{table:dorefa_wrpn} summarizes the accuracies and improvements obtained by incorporating \sinreq into DoReFa compared to plain implementations of DoReFa, and WRPN.
%
%
Results show that integrating \sinreq during training achieves 8.07\%, and 7.4\% accuracy improvements on average to DoReFa, and WRPN methods respectively.
Furthermore, we show that \sinreq can also be used to boost model accuracies considering both weights and activation quantization. 
Table \ref{table:SOTA} concretely shows that by summarizing accuracy results on various big networks.
As can be seen, incorportating \sinreq to DoReFa achieves consistent accuracy improvements compared to a wide varity of state-of-the-are quantized training techniques. 
%
As such, these results demonstrate that our proposed sinusoidal regularization can be considered as an auxiliary utility to existing methods to boost the efficiency of quantized training rather than being an alternative. As summarized in Table \ref{table:dorefa_wrpn}, invoking \sinreq during training consistently yields improved accuracies as compared to the plain implementation of the considered quantized training methods. 
\fi 
%
%
%
\if 0
\begin{figure}
\centering
\begin{minipage}{.45\textwidth}
  \centering
  \includegraphics[width=1.1\linewidth]{figs/finetune.pdf}
  \captionof{figure}{Convergence behavior: accuracy and \sinreq regularization loss over fine-tuning epochs for (a) CIFAR10, (b) SVHN }
  \label{fig:finetune}
\end{minipage}%
\hfill 
\begin{minipage}{.45\textwidth}
  \centering
  \includegraphics[width=1.1\linewidth]{figs/train.pdf}
  \captionof{figure}{Comparing convergence behavior with and without \sinreq during training from scratch (a) accuracy, (b) training loss}
  \label{fig:train}
\end{minipage}
\end{figure}
\fi 
\if 0
\begin{figure}
\centering
  \includegraphics[width=0.5\textwidth]{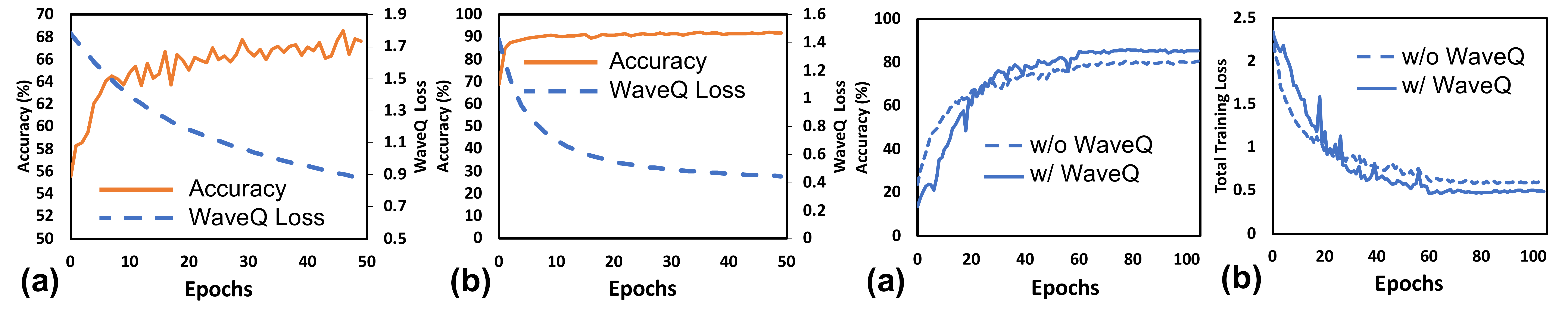}
\vspace{-0.65cm}
  \caption{Convergence behavior: accuracy and \sinreq regularization loss over fine-tuning epochs for (a) CIFAR10, (b) SVHN.
  		 Comparing convergence behavior with and without \sinreq during training from scratch (c) accuracy, (d) training loss. 			Network: VGG-11, 2-bit DoReFa quantization}
\vspace{-0.65cm}
  \label{fig:conv}
\end{figure}
\fi 
\if 0
\begin{figure}
\centering
  \includegraphics[width=0.4\textwidth]{figs/finetune2.pdf}
  \caption{Convergence behavior: accuracy and \sinreq regularization loss over fine-tuning epochs for (a) CIFAR10, (b) SVHN }
  \label{fig:finetune}
\end{figure}
\begin{figure}
\centering
  \includegraphics[width=0.4\textwidth]{figs/train2.pdf}
  \caption{Comparing convergence behavior with and without \sinreq during training from scratch (a) accuracy, (b) training loss. Network: VGG-11, 2-bit DoReFa quantization}
  \label{fig:train}
\end{figure}
%
\niparagraph{Convergence analysis.}
Figure \ref{fig:conv} (a), (b) show the convergence behavior of \sinreq by visualizing both accuracy and regularization loss over finetuning epochs for two networks: CIFAR10 and SVHN.
As can be seen, the regularization loss (\sinreq Loss) is minimized across the finetuning epochs while the accuracy is maximized. This demonstrates a validity for the proposed regularization being able to optimize the two objectives simultaneously.
Figure \ref{fig:conv} (c), (d) contrasts the convergence behavior with and without \sinreq for the case of training from scratch for VGG-11.  
As can be seen, at the onset of training, the accuracy in the presence of \sinreq is behind that without \sinreq. This can be explained as a result of optimizing for an extra objective in case of with \sinreq as compared to without.
Shortly thereafter, the regularization effect kicks in and eventually achieves $\sim6\%$  accuracy improvement.

The convergence behavior, however, is primarily controlled by the regularization strengths $(\lambda_{w})$.
%
As briefly mentioned in Section \ref{sec:sinreq}, $\lambda_q  \in  \lbrack 0, \infty) \,$ is a hyperparameter that weights the relative contribution of the proposed regularization objective to the standard accuracy objective.

%
%
%
We reckon that careful setting of $\lambda_{w}$, $\lambda_{\beta}$ across the layers and during the training epochs is essential for optimum results~\cite{DBLP:journals/corr/abs-1809-00095}.
\fi

\if 0
\section{Reproducibility, Limitations, and Future Work}
\niparagraph{Reproducibility.} The proposed method is straightforward to be implemented either within conventional full precision training/fine-tuning techniques or as an auxiliary utility to existing low precision training methods.  Standard training algorithm \bench{``Backpropagation''} and optimization method \bench{``Gradient Descent'} are used during training/fine-tuning with SinReQ.

\niparagraph{Limitations.}
Due to the inherent regular periodicity of the sinusoidal functions, SinReQ is introduced in the context of uniform quantization. Non-uniform quantization or logarithmic quantization are not considered in this study.
The proposed regularization is applied for weights quantization only. Activations quantization is not considered.
\niparagraph{Future work.}
\fi

\vspace{-0.3cm}
\section{Discussion}
\begin{figure}
  \centering
	\includegraphics[width=1\linewidth]{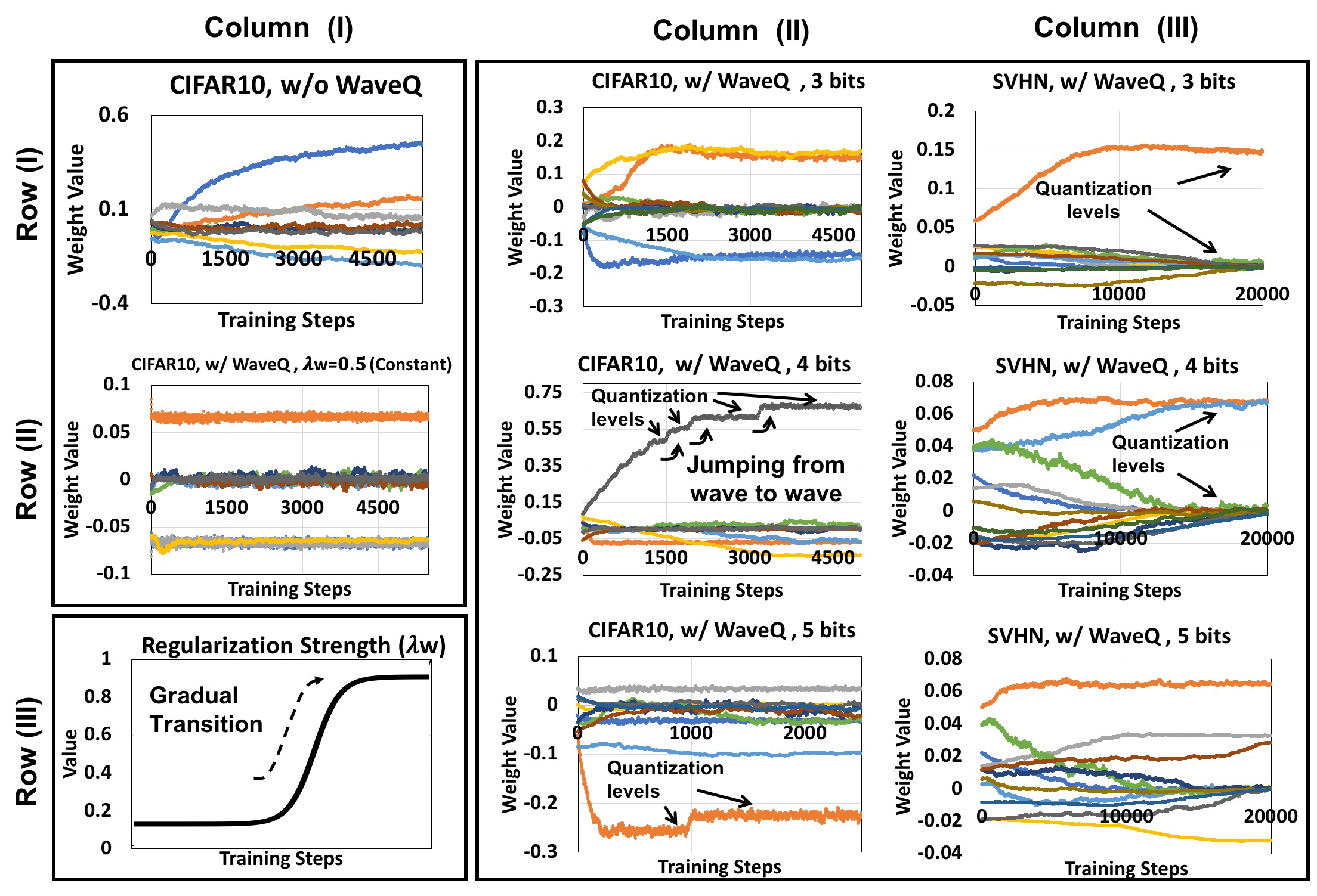}	
\vspace{-0.65cm}
	\caption{Weight trajectories. The 10 colored lines in each plot denote the trajectory of 10 different weights.}
\vspace{-0.65cm}
  \label{fig:w_traj}
\end{figure}
\if 0
\textbf{(1) Intuitively}: \sinreq regularization imposes a penalty proportional to the quantization error by aligning the minima of the sinusoidal regularizer with the quantization levels.
As such, it naturally minimizes the quantization error over the course of standard training process that uses gradient decent. 
%
%
Since this happens naturally, the accuracy loss, after direct quantization, is minimal. 
The \sinreq hyper-parameters (Regularization strength: $\lambda_w$, $\lambda_\beta$) balances the tradeoff between the two objectives (the original loss and \sinreq) to ensure joint optimization of both objectives.
Additionally, \sinreq may also be thought of as a way to improve the generalization performance in the quantized domain where the derivative of \sinreq (proportional to the quantization error) acts as an additive noise component to the weight updates from the derivative of the empirical error during the training iterations. 
\fi 

\if 0
\begin{figure}
  \centering
	\includegraphics[width=0.8\linewidth]{figs/train_finetune1.pdf}	
	\vspace{-.5em}
	\caption{(b) Validation top-1 accuracy comparison w/o vs w/ \sinreq, for training from scratch and then fine-tuning.}
	\label{fig:acc_res}
	\vspace{-1.5em}
\end{figure}
\fi 
%

%
We conduct an experiment that uses \sinreq for training from scratch.
For the sake of clarity, we are considering in this experiment the case of preset bitwidth assignments (i.e., $\lambda_\beta = 0$).
Figure~\ref{fig:w_traj}-Row(I)-Column(I) shows weight trajectories without \sinreq as a point of reference.
Row(II)-Column(I) shows the weight trajectories when \sinreq is used with a constant $\lambda_w$.
As Figure~\ref{fig:w_traj}-Row(II)-Column(I) illustrates, using a constant $\lambda_w$ results in the weights being stuck in a region close to their initialization, (i.e., quantization objective dominates the accuracy objective). 
However, if we dynamically change the $\lambda_w$ following the exponential curve in Figure~\ref{fig:w_traj}-Row(III)-Column(I)) during the from-scratch training, the weights no longer get stuck.
Instead, the weights traverse the space (\emph{i.e., jump from wave to wave}) as illustrated in Figure~\ref{fig:w_traj}-Columns(II) and (III) for CIFAR and SVHN, respectively.
In these two columns, Rows (I), (II), (III), correspond to quantization with 3, 4, 5 bits, respectively.
Initially, the smaller $\lambda_w$ values allow the gradient descent to explore the optimization surface freely, as the training process moves forward, the larger $\lambda_w$ gradually engages the sinusoidal regularizer, and eventually pushes the weights close to the quantization levels.
Further convergence analysis is provided in the Appendix A.
%
%

\vspace{-0.1cm}
\section{Related Work}

This research lies at the intersection of (1) quantized training algorithms and (2) techniques that discover bitwidth for quantization.
The following diuscusses the most related works in both directions.
In contrast, \sinreq modifies the loss function of the training to simultaneously \emph{learn} the period of an adaptive sinusoidal regularizer through the same stochastic gradient descent that trains the network.
The diffrentionablity of the adaptive sinusoidal regaulrizer enables simultaneously learning both the bitwidthes and pushing the wight values to the quantization levels.
As such, \sinreq can be used as a complementary method to some of these efforts, which is demonstrated by experiments with both DoReFa-Net~\cite{Zhou2016DoReFaNetTL} and WRPN~\cite{Mishra2017WRPNWR}.

Our preliminary efforts~\cite{sinreq_icml19_workshop} and another work concurrent to it~\cite{DBLP:journals/corr/abs-1811-09862} use a sinusoidal regularization to push the weights closer to the quantization levels.
However, neither of these two works make the period a differentiable parameter nor find bitwidths during training.

\niparagraph{Quantized training algorithms}
There have been several techniques \cite{Zhou2016DoReFaNetTL,Zhu2016TrainedTQ,Mishra2017WRPNWR} that train a neural network in a quantized domain after the bitwidth of the layers is determined manually.
%
DoReFa-Net~\cite{Zhou2016DoReFaNetTL} uses straight through estimator~\cite{DBLP:journals/corr/BengioLC13} for quantization and extends it for any arbitrary $k$ bit quantization.
DoReFa-Net generalizes the method of binarized neural networks to allow creating a
CNN that has arbitrary bitwidth below 8 bits in weights, activations, and gradients. 
WRPN~\cite{Mishra2017WRPNWR} is  training algorithm that compensates for the reduced precision by increasing the number of filter maps in a layer (doubling or tripling). 
%
TTQ~\cite{Zhu2016TrainedTQ} quantizes the weights to ternary values by using per layer scaling coefficients that are learnt during training. These scaling coefficients are used to scale the weights during inference.
%
PACT~\cite{Choi2018PACTPC} proposes a technique for quantizing activations by introducing an activation clipping parameter $\alpha$. This parameter ($\alpha$) is used to represent the clipping level in the activation function and is learned via back-propagation during training.
More recently, VNQ~\cite{DBLP:conf/iclr/AchterholdKSG18} uses a variational Bayesian approach for quantizing neural network weights during training. 
%
DCQ~\cite{dcq_icml19_workshop} employs sectional multi-backpropagation algorithm that leverages multiple instances of knowledge distillation and
intermediate feature representations to teach a quantized student through divide and conquer.

\if 0
There have been several techniques \cite{Zhou2016DoReFaNetTL,Zhu2016TrainedTQ,Mishra2017WRPNWR} that train a neural network in a quantized domain after the bitwidth of the layers is determined manually.
\hlblue{DoReFa-Net \cite{Zhou2016DoReFaNetTL} quantizes weights, activations and gradients of neural networks.
They suggest maintaining a high-precision floating point copy of the weights while feeding quantized weights into backprop.}
WRPN \cite{Mishra2017WRPNWR} introduces a scheme to \hlblue{train networks from scratch using reduced-precision activations by decreasing the precision of both activations and weights and increasing the number of filter maps in a layer.}
\citet{Zhu2016TrainedTQ} performs the training phase of the network in full precision, but for inference uses ternary weight. 
The weights are quantized using two scaling factors which are learned during training.
PACT~\cite{Choi2018PACTPC} introduces a quantization scheme for activations, where the variable $\alpha$ is the clipping level and is determined through a gradient descent based method. 
More recently, VNQ~\cite{DBLP:conf/iclr/AchterholdKSG18} uses a variational Bayesian approach for quantizing neural network weights during training. 

In contrast, \sinreq modifies the loss function of the training to simultaneously \emph{learn} the period of a and adaptive sinusoidal regularizer through the same stochastic gradient descent that trains the network.
As such, \sinreq can be used as a complementary method, which is demonstrated by experiments with both DoReFa-Net and WRPN.

Gradient-based training of quantized DNNs is challenging, as the quantizer itself, by construction, is discontinuous, which causes the gradient of a quantization function to vanish almost everywhere.
As such, several existing methods have taken the approach of approximating the gradient~\cite{Choi2018PACTPC, DBLP:conf/cvpr/JungSLSHKHC19}.
\fi 
\niparagraph{Loss-aware weight quantization.}
Recent works pursued loss-aware minimization approaches for quantization. 
\cite{DBLP:conf/iclr/HouYK17} and \cite{DBLP:conf/iclr/HouK18} developed approximate solutions using proximal Newton algorithm to minimize the loss function directly under the constraints of low bitwidth weights.
One effort~\cite{DBLP:journals/corr/abs-1809-00095} proposed to learn the quantization of DNNs through a regularization term of the mean-squared-quantization
error. 
LQ-Net~\cite{DBLP:conf/eccv/ZhangYYH18} proposes to jointly train the network and its quantizers.
The quantizer is a inner product between a basis vector and the binary coding vector.
%
%
%
DSQ~\cite{DBLP:journals/corr/abs-1908-05033} employs a series of tanh functions to gradually approximate the staircase
function for low-bit quantization (e.g., sign for 1-bit case),
and meanwhile keeps the smoothness for easy gradient calculation.
Although some of these techniques use regularization to guide the process of quantized training, none explores the use of adaptive sinusoidal regularizers for quantization.
Moreover, unlike \sinreq, these techniques do not find the bitwidth for quantizing the layers.

%
%
%
%

\niparagraph{Techniques for discovering quantization bitwidths.}
A recent line of research focused on methods which can also find the optimal quantization parameters, e.g., the bitwidth, the stepsize, in parallel to the network weights.
Recent work~\cite{Ye2018AUF} based on ADMM runs a binary search to minimize the total square quantization error in order to decide the quantization levels for the layers. 
They use a heuristic-based iterative optimization technique for fine-tuning.
%
%
Most recently, ~\cite{DBLP:journals/corr/abs-1905-11452} proposed to indirectly learn quantizer's parameters via Straight Through Estimator (STE)~\cite{DBLP:journals/corr/BengioLC13} based approach.
In a similar vein, \cite{DBLP:journals/corr/abs-1902-08153} has proposed to learn the quantization mapping for each layer in a deep network by approximating the gradient to the quantizer step size that is sensitive to quantized state transitions.
On another side, recent works~\cite{releq_nips19_workshop} proposed a reinforcement learning based approach to find an optimal bitwidth assignment policy.


\vspace{-0.3cm}
\section{Conclusion}
This paper provided a new approach in using sinusoidal regularizations to cast the two problems of finding bitwidth levels for layers and quantizing the weights as a gradient-based optimization through sinusoidal regularization.
While this technique consistently improves the accuracy, \sinreq does not require changes to the base training algorithm or the neural network topology.

%
%

\bibliography{reference/paper}
\bibliographystyle{formattings/icml2019}

\clearpage
\appendix
\section{Convergence analysis}
Figure \ref{fig:conv} (a), (b) show the convergence behavior of \sinreq by visualizing both accuracy and regularization loss over finetuning epochs for two networks: CIFAR10 and SVHN.
As can be seen, the regularization loss (\sinreq Loss) is minimized across the finetuning epochs while the accuracy is maximized. This demonstrates a validity for the proposed regularization being able to optimize the two objectives simultaneously.
Figure \ref{fig:conv} (c), (d) contrasts the convergence behavior with and without \sinreq for the case of training from scratch for VGG-11.  
As can be seen, at the onset of training, the accuracy in the presence of \sinreq is behind that without \sinreq. This can be explained as a result of optimizing for an extra objective in case of with \sinreq as compared to without.
Shortly thereafter, the regularization effect kicks in and eventually achieves $\sim6\%$  accuracy improvement.

The convergence behavior, however, is primarily controlled by the regularization strengths $(\lambda_{w})$.
%
As briefly mentioned in Section \ref{sec:sinreq}, $\lambda_q  \in  \lbrack 0, \infty) \,$ is a hyperparameter that weights the relative contribution of the proposed regularization objective to the standard accuracy objective.

%
%
%
We reckon that careful setting of $\lambda_{w}$, $\lambda_{\beta}$ across the layers and during the training epochs is essential for optimum results~\cite{DBLP:journals/corr/abs-1809-00095}.
\begin{figure}
\centering
  \includegraphics[width=0.5\textwidth]{figs/conv_figs.pdf}
\vspace{-0.65cm}
  \caption{Convergence behavior: accuracy and \sinreq regularization loss over fine-tuning epochs for (a) CIFAR10, (b) SVHN.
  		 Comparing convergence behavior with and without \sinreq during training from scratch (c) accuracy, (d) training loss. 
		 Network: VGG-11, 2-bit DoReFa quantization}
\vspace{-0.65cm}
  \label{fig:conv}
\end{figure}
\begin{figure*}
  \centering
	\includegraphics[width=0.8\linewidth]{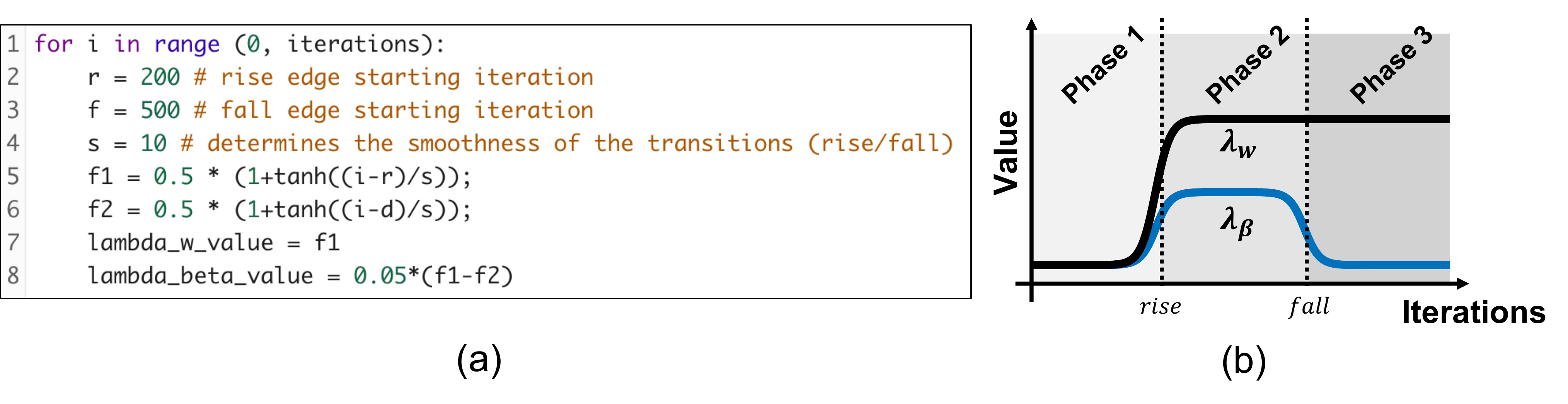}	
	\vspace{-.5em}
	\caption{Math formula for setting $\lambda_w$ and $\lambda_\beta$ during training iterations.}
	\label{fig:math}
	\vspace{-1.5em}
\end{figure*}

\section{Detailed Theoretical Analysis}
\subsection{Motivation}
The results of this section are motivated by the following question. 
\begin{question}
Suppose that a function $F:\mathbb{R}^n\rightarrow [0,\infty)$ has many global minima and that $Q\subset \mathbb{R}^n$ is closed. How do we isolate the global minima of $F$ that are closest to $Q$ without actually computing the full set of global minima of $F$?
\end{question}
Intuitively, we would like to show that if $\epsilon>0$ is very small, then the global minima of the function
\[F(x)+\epsilon d(x,Q)\] are very close to the global minima of $F$ closest to $Q$. To achieve this we will have to introduce first the concept of convergence of sets and then we will show that our intuition is correct by proving that the set of global minima to the above relaxed function converges to a subset of global minima of $F$ closest to $Q$. 

\subsection{Relevant Definitions}
\begin{definition}
If $F:\mathbb{R}^n\rightarrow [0,\infty)$ satisfies $\lim_{|x|\rightarrow\infty}F(x)=+\infty$, we will say that $F$ is coercive.  
\end{definition}
\begin{definition}
For a coercive function $F:\mathbb{R}^n\rightarrow [0,\infty)$ we let $S_{F}=\{x\in\mathbb{R}^n: F(x)=\min_{y\in\mathbb{R}^n}F(y)\}$ be coercive. 
\end{definition}
\begin{lemma}
Assume that $F:\mathbb{R}^n\rightarrow [0,\infty)$ is continuous and coercive. Then $F$ has at least one global minimum. That is, $S_F$ is non-empty. Furthermore, $S_F$ is a compact set. 
\end{lemma}

\begin{definition} 
Let $F,G:\mathbb{R}^n\rightarrow[0,\infty)$ are continuous and assume that $F$ is coercive. Define \[S_{F,G}=\{x\in S_F: G(x)=\inf_{y\in S_F}G(y)\},\] the minima of $F$ which minimize $G$ among the minima of $F$.

\end{definition}

\begin{definition}
Let $Q\subset\mathbb{R}^n$ be a closed set and assume that $x\in\mathbb{R}^n$. Define the distance from $x$ to the set $Q$ to be \[d(x,Q)=\inf_{y\in Q} \|x-y\|.\] Observe that since $Q$ is a closed set we have that $x\in Q$ if and only if $d(x,Q)=0$ and otherwise $d(x,Q)>0$. 
\end{definition}

\begin{definition}
Let $A,B\subset\mathbb{R}^n$ be compact sets. We define the Hausdorff distance between $A$ and $B$ by \[d_{H}(A,B)=\max\{\sup_{x\in A} d(x,A),\sup_{y\in B}d(y,B)\}.\] Observe that $d_H(A,B)=0$ if and only if $A=B$. 
\end{definition}

\begin{definition}
Let $\{S_\delta\}_{\delta>0}$ be a family of compact subsets of $\mathbb{R}^n$. We say that $\lim_{\delta\rightarrow 0} S_\delta=S_*$ if \[\lim_{\delta\rightarrow 0}d_H(S_\delta,S_*)=0.\]
\end{definition}

\begin{lemma}\label{ConvergenceLemma}
Let $S_\delta$ be a family of compact subsets of $\mathbb{R}^n$, then $\lim_{\delta\rightarrow 0} S_\delta =S_*$ if and only if the following two conditions hold.
\begin{enumerate}
\item If $x_\delta\in S_\delta$ converges to $x$, then $x\in S_*$
\item For every $x\in S_*$, there exists a family $x_\delta\in S_\delta$ with $x_\delta\rightarrow x$.  
\end{enumerate}
\end{lemma}

The lemma is just an exercise in the definition. 

\subsection{Statement of the Theorem}

\begin{theorem}\label{SetOfMinimaConverge2}
Let $F,G:\mathbb{R}^n\rightarrow[0,\infty)$ are continuous and assume that $F$ is coercive. Consider the sets $S_{F+\delta G}$, the set of points at which $F+\delta G$ is globally minimum. The following are true:
\begin{enumerate}
\item If $\delta_n\rightarrow 0$ and $S_{F+\delta_n G}\rightarrow S_*$, then \[S_*\subset S_{F,G}\]. 

\item If $\delta_n\rightarrow 0$ then there is a subsequence $\delta_{n_k}\rightarrow 0$ and a non-empty set $S_*\subset S_{F,G}$ so that $S_{F+\delta_{n_k}G}\rightarrow S_*.$
\end{enumerate}

\end{theorem}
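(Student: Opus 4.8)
The plan is to prove the two statements in order, treating the first as the analytic core and deducing the second from it together with a compactness argument. Throughout I would write $\lambda = \min_{x} F(x)$, which is attained since $F$ is continuous and coercive, and $\mu = \inf_{x \in S_F} G(x)$, so that $S_{F,G} = \{x \in S_F : G(x) = \mu\}$ by definition.

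For statement (1), suppose $\delta_n \to 0$ and $S_{F+\delta_n G} \to S_*$. To show $S_* \subseteq S_{F,G}$ I would fix an arbitrary $x_* \in S_*$ and, using condition (2) of Lemma~\ref{ConvergenceLemma}, choose minimizers $x_n \in S_{F+\delta_n G}$ with $x_n \to x_*$; it then suffices to prove $x_* \in S_{F,G}$. This I would do in two steps. First, for any fixed $x \in S_F$ the chain $\lambda \le F(x_n) \le (F+\delta_n G)(x_n) \le (F+\delta_n G)(x) = \lambda + \delta_n G(x) \to \lambda$ forces $F(x_n) \to \lambda$, so by continuity $F(x_*) = \lambda$ and hence $x_* \in S_F$. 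Second, picking $\hat x \in S_{F,G}$ with $G(\hat x) = \mu$, the minimality of $x_n$ gives $\lambda + \delta_n G(x_n) \le (F+\delta_n G)(x_n) \le (F+\delta_n G)(\hat x) = \lambda + \delta_n \mu$, whence $G(x_n) \le \mu$; continuity then yields $G(x_*) \le \mu$, and since $x_* \in S_F$ this must be an equality, so $x_* \in S_{F,G}$.

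For statement (2), the first task is to confine all the minimizer sets to a single compact set, which is exactly where coercivity enters. For $\delta_n \le 1$ and any $x_n \in S_{F+\delta_n G}$, comparison with a fixed $x_0 \in S_F$ gives $F(x_n) \le (F+\delta_n G)(x_n) \le (F+\delta_n G)(x_0) = \lambda + \delta_n G(x_0) \le \lambda + G(x_0)$, so every such minimizer lies in the sublevel set $K = \{x : F(x) \le \lambda + G(x_0)\}$, which is compact by continuity and coercivity. Thus, discarding the finitely many indices with $\delta_n > 1$, all the $S_{F+\delta_n G}$ are non-empty compact subsets of the fixed compact set $K$. I would then invoke the standard theory of the Hausdorff metric (the Blaschke selection theorem: the hyperspace of non-empty compact subsets of $K$ is itself compact) to extract a subsequence $\delta_{n_k} \to 0$ with $S_{F+\delta_{n_k} G} \to S_*$ for some compact $S_* \subseteq K$. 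Non-emptiness of $S_*$ follows by choosing $x_{n_k} \in S_{F+\delta_{n_k} G}$, passing to a convergent sub-subsequence $x_{n_k} \to x_*$, and applying condition (1) of Lemma~\ref{ConvergenceLemma} to get $x_* \in S_*$. Finally statement (1), applied along this subsequence, gives $S_* \subseteq S_{F,G}$, completing the argument.

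The main obstacle I anticipate is statement (2), and specifically the uniform confinement of the minimizer sets to one compact set: without it the Hausdorff distances between the $S_{F+\delta_n G}$ need not even be defined, and the selection theorem cannot be applied. Coercivity of $F$ together with $G \ge 0$ is precisely what makes the sublevel bound uniform in $n$, so I would phrase that step carefully to ensure the bound $\lambda + G(x_0)$ is independent of $\delta_n$. By contrast statement (1) is essentially a robust two-sided squeeze, and I expect it to go through routinely once the comparison points $x \in S_F$ and $\hat x \in S_{F,G}$ are fixed correctly.
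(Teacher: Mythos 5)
Your proof is correct and follows essentially the same route as the paper's: statement (1) is established by the identical two-step squeeze (first $F(x_*)=\lambda$ by comparison with any $x\in S_F$, then $G(x_*)=\mu$ by comparison with $\hat x\in S_{F,G}$), and statement (2) is deduced from (1) via Hausdorff-metric compactness. The only difference is that you spell out the ``standard theory'' the paper merely invokes for (2) --- the uniform confinement of all minimizer sets to the compact sublevel set $\{x: F(x)\le \lambda + G(x_0)\}$ and the Blaschke selection theorem --- which is a worthwhile elaboration rather than a different argument.
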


\begin{proof}
The second statement follows from the standard theory of Hausdorff distance on compact metric spaces and the first statement. For the first statement, assume that $S_{F+\delta_n G}\rightarrow S_*$. We wish to show that $S_{*}\subset S_{F,G}$. Assume that $x_{n}$ is a sequence of global minima of $F+\delta_{n}G$ converging to $x_*$. It suffices to show that $x_*\in S_{F,G}$. First let us observe that $x_*\in S_F$. Indeed, let \[\lambda=\inf_{x\in \mathbb{R}^n}F(x)\] and assume that $x\in S_F$. Then, 
\[\lambda\leq F(x_{n})\leq (F+\delta_n G)(x_{n})\leq (F+\delta_n G)(x)=\lambda+\delta_n G(x)\rightarrow \lambda.\] Thus, since $F$ is continuous and $x_n\rightarrow x_*$ we have that $F(x_*)=\lambda$ which implies $x_*\in S_{F}$. Next, define \[\mu=\inf_{x\in S_F} G(x).\] Let $\hat x\in S_{F,G}$ so that $G(\hat x)=\mu$. Now observe that, by the minimality of $x_n$ we have that \[\lambda+\delta_n\mu=(F+\delta_n G)(\hat x)\geq (F+\delta_n G)(x_n)\geq \lambda+\delta_n G(x_n)\] Thus, 
\[G(x_n)\leq \mu\] for all $n$. Since $G$ is continuous and $x_n\rightarrow x_*$ we have that $G(x_*)\leq \mu$ which implies that $G(x_*)=\mu$ since $x_*\in S_F$. Thus, $x_*\in S_{F,G}$. 
\end{proof}


\end{document}